\def\Figref#1{Figure~\ref{#1}}
\def\Secref#1{Section~\ref{#1}}
\def\eqref#1{equation~\ref{#1}}
\def\1{\bm{1}}
\def\gemma{{Gemma}\xspace}
\def\llama{{Llama3}\xspace}
\def\mistral{{Mistral}\xspace}
\def\rvx{{\mathbf{x}}}
\def\rvy{{\mathbf{y}}}
\DeclareMathAlphabet{\mathsfit}{\encodingdefault}{\sfdefault}{m}{sl}
\SetMathAlphabet{\mathsfit}{bold}{\encodingdefault}{\sfdefault}{bx}{n}
\DeclareMathOperator*{\argmax}{arg\,max}
\newcolumntype{Y}{>{\raggedright\arraybackslash}1.3X}
\newcolumntype{Z}{>{\raggedright\arraybackslash}1.5X}
\definecolor{ForestGreen}{RGB}{34,139,34}
\definecolor{darkblue}{rgb}{0.0, 0.0, 0.5}
\newcommand{\eg}{e.g.\@\xspace}
\newcommand{\ie}{i.e.\@\xspace}
\theoremstyle{plain}
\newtheorem{proposition}{Proposition}%[theorem]
\newtheorem{definition}{Definition}
\newtheorem{assumption}{Assumption}
\theoremstyle{remark}
\newcommand{\baselineM}{{M}}
\newcommand{\changedM}{{M'}}
\newcommand{\baselineDist}{{P_B^{\baselineM}}}
\newcommand{\changedDist}{{P_B^{\changedM}}}
\title{An Auditing Test to Detect Behavioral Shift in Language Models\thanks{This work was published at ICLR 2025.}}
\author{Leo Richter\textsuperscript{\rm 1} \qquad Xuanli He\textsuperscript{\rm 1} \qquad Pasquale Minervini\textsuperscript{\rm 2,3} \qquad Matt J. Kusner\textsuperscript{\rm 4,5} \\
\textsuperscript{\rm 1}UCL Centre for Artificial Intelligence, University College London, United Kingdom \\
\textsuperscript{\rm 2}School of Informatics, University of Edinburgh, United Kingdom \\
\textsuperscript{\rm 3}Miniml.AI, United Kingdom \\
\textsuperscript{\rm 4}Polytechnique Montréal, Canada \\
\textsuperscript{\rm 5}Mila – Quebec AI Institute, Canada \\
\texttt{ucablri@ucl.ac.uk, matt.kusner@mila.quebec} \\
}
\begin{document}

\maketitle

\begin{abstract}
As language models (LMs) approach human-level performance, a comprehensive understanding of their behavior becomes crucial to avoid potential harms. 
%This includes evaluating capabilities, biases, task performance, and alignment with societal values.
%
While extensive initial evaluations, including red teaming and diverse benchmarking, can establish a behavioral profile, 
subsequent fine-tuning or deployment modifications may alter these model behaviors in unintended ways.
We study the \emph{behavioral shift auditing} problem, where the goal is to detect unintended changes in model behavior. 
We formalize this problem as a sequential hypothesis test.  
We apply and extend a recent testing method to include a configurable tolerance parameter that adjusts sensitivity to behavioral changes for different use cases. 
The test is guaranteed to be consistent and has tight control over the Type I error rate. 
We evaluate our approach using two case studies: monitoring model changes in (a) toxicity and (b) translation performance. 
We find that the test is able to detect distribution changes in model behavior using hundreds of prompts.
\end{abstract}
% Our test compares model generations from a baseline model to those of the model under scrutiny and provides theoretical guarantees for change detection while controlling false positives.

\section{Introduction}
 
Language models (LMs) can now achieve human-level performance in a wide range of tasks, including text summarization, machine translation, coding, and even acting as AI scientists: generating hypotheses and designing experiments \citep{achiam2023gpt, katz2024gpt, lu2024ai,zhang-etal-2024-benchmarking}. 
As capabilities continue to scale, evaluating LM behaviors becomes increasingly important and increasingly difficult \citep{hendrycks2021unsolved, ngo2022alignment, wolf2023fundamental}.
Large-scale evaluations—such as comprehensive behavior and capability assessments \citep{wang2023decodingtrust} and red-teaming exercises \citep{DBLP:conf/emnlp/PerezHSCRAGMI22}—are widely used to verify that language models (LMs) behave safely and as expected. However, these evaluations tend to be expensive and are not well-suited for continuous monitoring, especially when models are  updated or fine-tuned with new data. This is problematic because even seemingly benign or narrow modifications can inadvertently lead to undesirable changes in model behavior~\citep{qi2023fine,betley2025emergent}. This raises the question: How can we quickly and cheaply detect unwanted changes in LM behavior?
% Because of this, many are looking for ways to use them to improve existing systems, and create new ones  \citep{kasneci2023chatgpt,felten2023will}.
%
% Unfortunately, one large roadblock for LM usage is unexpected behavior that can arise from seemingly benign model modifications \cite{qi2023fine,casper2023open}.
% % propensity of LMs to generate harmful content \citep{weidinger2021ethical}.
% %
% For example, \citet{qi2023fine} found that fine-tuning with benign datasets can inadvertently degrade the safety alignment of LMs. 
% For example, GPT-3 has significant anti-Muslim biases~\citep{abid2021persistent}, and GPT-4 has racial and gender biases~\citep{zack2024assessing}.
%
% The field of \emph{AI alignment} is aimed at ensuring LM behavior is aligned with our societal values~\citep{ji2023ai}.
%
% This has spurred work on LM behavior evaluation through benchmarks~\citep{wang2023decodingtrust} and red-teaming~\citep{DBLP:conf/emnlp/PerezHSCRAGMI22}. At the same time
% Large-scale LM behavior and capability evaluations~\citep{wang2023decodingtrust} and red-teaming~\citep{DBLP:conf/emnlp/PerezHSCRAGMI22} are the methods of choice to assess model behavior; however, these are often expensive and thus not suitable for continuous monitoring. This raises the question: How can we quickly detect unwanted changes in behavior?
%

Consider two hypothetical settings where this question might be asked: \begin{inparaenum}[(1)]
\item \emph{Internal Audit}: A company develops a language model that has passed rigorous safety and performance evaluations. After deploying the model, they continue to fine-tune it to improve its performance on certain tasks. The development team wants to stay informed about any \emph{drastic} changes this might induce in the model's behavior—particularly shifts in areas unrelated to the intended updates. How can the team rapidly detect meaningful changes in model behavior throughout the development cycle?
% PUT HERE THE CITATIONS ABOUT FINETUNING LEADING TO MISALIGNMENT
%
\item \emph{External Audit}: A regulatory body certifies a language model for public deployment after extensive safety evaluation. However, they are concerned that the deployed model's behavior may change over time due to updates or intentional modifications. Since they only have access to the model through an API and cannot inspect its internal parameters, they require a mechanism to regularly check that the model's behavior remains consistent with the certified version. How can the regulator regularly check the deployed model's behavior is the same as the previously certified one?
\end{inparaenum}
We call the problem of detecting changes in LM behavior distributions over time \emph{behavioral shift auditing} problems.

In this paper, we formalize the problem of behavioral shift auditing in language models and propose a statistical test that monitors changes in model behavior using only black-box access (\eg, via API calls). Our goal is to develop a sample-efficient method that guarantees detection of behavioral shifts while tightly controlling the rate of false positives. Further, it should provide the user with a tolerance parameter that allows a behavior distribution to change by some amount $\epsilon$ without triggering a detection. This parameter controls the strictness of the auditing test - in some settings (\eg, example (1)), a more liberal $\epsilon$ might be appropriate, while in other cases (\eg, example (2)) one might require a more conservative $\epsilon$ or even want to disallow any change at all.
The key insight behind our approach is to frame behavior shift auditing as a hypothesis testing problem over the model’s behavior distribution. This framing makes our method applicable to a wide range of measurable behaviors—such as dangerous capabilities \citep{phuong2024evaluating}, mathematical reasoning \citep{mishra2022lila}, and biases \citep{wang2023decodingtrust,kotek2023gender}. To this end, we leverage and extend recent advances in testing by betting  \citep{DBLP:conf/aistats/PandevaFRS24}. Under mild assumptions, our sequential test provably detects any change given enough samples, while ensuring non-asymptotic control over false positives. We demonstrate our test on detecting shifts in toxicity and translation performance. We find that we can detect changed LM behaviors using hundreds of prompts. %serving as an effective early warning system before committing to expensive full-scale evaluations. 
We release our code here: \url{https://github.com/richterleo/lm-auditing-test}.

\begin{figure*}[t]
% \vskip 0.2in
\begin{center}
\centerline{\includegraphics[width=\textwidth]{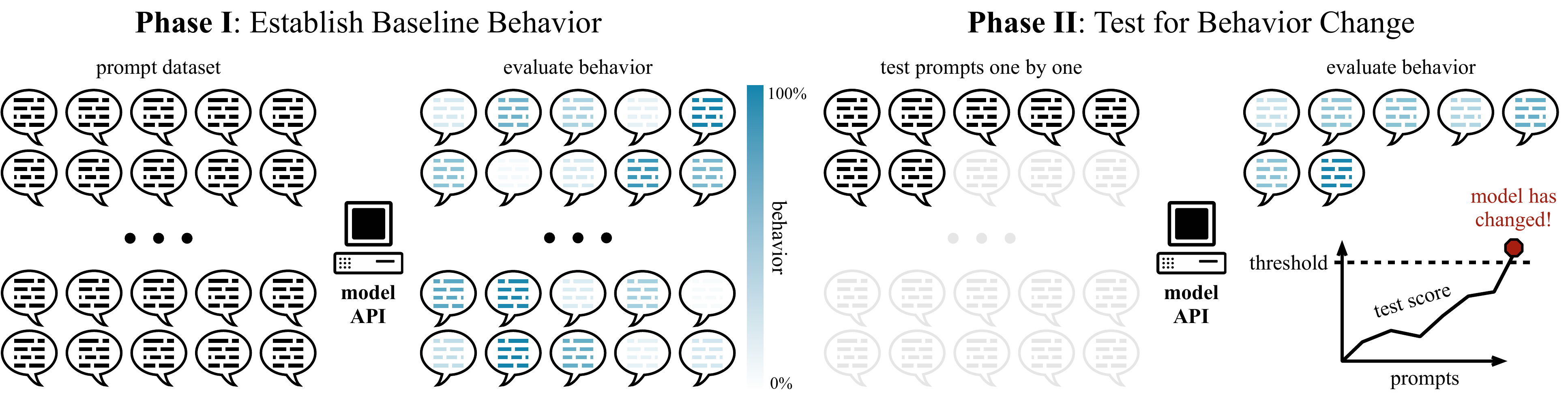}}
\caption{Overview of behavior shift auditing framework.}%\textbf{Behavior shift auditing.} A regulator can use the test we describe to perform an external audit: 1. The regulator initially certifies an LM by prompting and evaluating the set of generations received; 2. Later, tipped off that LM behavior may have changed, the regulator poses as a consumer and sends prompts to the model vendor, collecting the generations; 3. The regulator compares the distribution of behavior scores $b(\cdot)$ between the initial, certified generations and the later generations using a test. If the distributions are sufficiently different the test triggers. Using our proposed method, the regulator can test samples sequentially without increasing the false-positive rate. The method is guaranteed to detect a change if one exists, given enough samples (more details in \cref{sec:methods}).}
\label{fig:overall}
\end{center}
% \vskip -0.2in
\vspace{-6mm}
\end{figure*}

\section{Related Work}
\label{sec:related_work}

\paragraph{LM behavior functions.} 
% benchmarking
% red-teaming
% check out AI Alignment survey paper for references 
Early evaluations of NLP models relied on curated datasets for detecting biases or toxicity \citep{bolukbasi2016man}; larger collections of data were constructed \eg through web scraping \citep{zhao2018gender,zampieri2019predicting,nangia2020crows,rosenthal2021solid} and, more recently, by leveraging LLMs themselves to generate data \citep{zhang2022constructing,perez2023discovering}.  Meanwhile, early work on behavior functions focused on measuring bias, toxicity, and hallucinations \citep{vidgen2020learning, achiam2023gpt, team2023gemini, chern2023factool, varshney2023stitch,llama3}. Since the rise of LMs with human-level performance, the set of behavior functions has exploded~\citep{zou2023universal}. It has become more nuanced, including complex characteristics such as power-seeking behavior \citep{park2023generative, sharmatowards}, situational awareness \citep{zou2023representation}, and deception \citep{hagendorff2024deception}. However, even with access to massive datasets and carefully constructed behavior functions it can be difficult to discover these behaviors from static inputs \citep{kalin2020black}. To address this, \citet{DBLP:conf/emnlp/PerezHSCRAGMI22} introduced the notion of \emph{red-teaming} for LM alignment. This allows prompts to be adversarially-constructed to expose failure cases, which arise in many state-of-the-art models \citep{chaojailbreaking}. %The testing procedure we present here is agnostic to both the prompts and the behavior functions used in LM evaluation.

\paragraph{Model change identification.} For the case where one wishes to identify \emph{any change} in model behavior (\ie, $\epsilon=0$) there are multiple other techniques that can be used. The first set uses ideas from \emph{formal verification} to ensure that the predictions from a model are guaranteed to come from a specific model \citep{ghodsi2017safetynets,dong2021veridl,fan2023validating,weng2023pvcnn}. In general, however, these methods are computationally intensive and do not scale to state-of-the-art LMs. A second, more efficient idea is to \emph{watermark} the model \citep{zhu2018hidden, amrit2022survey, He_Xu_Lyu_Wu_Wang_2022, he2022cater, kirchenbauer2023watermark, kuditipudi2023robust, yoo-etal-2023-robust}. The idea is to embed signals into model generations that can be detected algorithmically. However, watermarks are often inserted by the model owner \citep{kirchenbauer2023watermark,kuditipudi2023robust}, allowing them (or an actor that has compromised the model) to insert it into any model that is being audited. This precludes its use for many external auditing settings. For internal auditing, a watermark may break under a small model change that is acceptable. Our work is also related to work on concept drift \citep{bayram2022concept} and prompt stability \citep{li2024measuring}. In principle our test can be used to detect concept and generation changes, however the focus of these works is on model performance and generation similarity, as opposed to behavior change. 

\paragraph{Sequential hypothesis testing.}
Sequential hypothesis testing allows one to analyze data without fixing the sample size in advance~\citep{wald1945sequential}, offering the potential for greater sample efficiency when significant effects exist~\citep{arrow1949bayes}.
However, naive repeated testing can increase the Type I error rate (i.e., false positives) as the number of tests grows~\citep{jennison1999group}.
To prevent this inflation of false positives, various methods have been developed, including the recent \emph{testing by betting} framework~\citep{robbins1970statistical, ramdas2023game}, which preserves statistical efficiency while tightly controlling the Type I error rate.
Within this framework, a method called Deep Anytime-Valid Testing (DAVT) \citep{DBLP:conf/aistats/PandevaFRS24} designs powerful sequential non-parametric tests by integrating deep learning models into the testing by betting framework. They demonstrate, on a variety of tasks, including two-sample testing, competitive performance compared to other state-of-the-art non-parametric sequential tests, such as the E-C2ST \citep{lheritier2018sequential} and Seq-IT \citep{podkopaev2024sequential}. DAVT uses a model, trained on past observations, to produce an optimized betting score on new data. In this work, we will extend this test to include a tunable tolerance parameter $\epsilon$. %It is also consistent under the assumptions of \citet[Proposition 4.3]{DBLP:conf/aistats/PandevaFRS24}, \ie, the power of the test converges to $1$ as the number of samples goes to infinity.

\section{Preliminaries}

\paragraph{Testing by betting.} 
%
% The fundamental idea behind this framework is the principle of \textit{testing by betting}, inspired by game theory \citep{shafer2021testing}.
%
The testing by betting framework represents evidence against the null hypothesis as the gain in wealth $W$ of a bettor wagering on observed samples \citep{shafer2021testing}.
Before observing new samples, the bettor ``buys'' a test statistic at the ``price'' of its expected value under $ \mathbf{H_0}$.
After new samples are obtained, the bettor's wealth $W$ is multiplied by the ratio between the actual observed test statistic and its expectation. This ratio is referred to as the \textit{betting score} $S_t$. 
The bettor reinvests in subsequent ``rounds'' (\ie, as new data is observed), and the observed betting scores are repeatedly multiplied, leading to a cumulative wealth process.
Under $\mathbf{H_0}$, no betting strategy can consistently increase the bettor's wealth, ensuring control over the Type I error rate \citep{ramdas2023game}.
Let the bettor's (non-negative) wealth after $t$ (batches of) observations be $W_t$. In order to design a test from this wealth process we require that $W_t$ satisfies the following
\begin{equation} \label{eq:eprocess}
    \sup_{P \in \mathbf{H_0}} \mathbb{E}_{P}[W_t] \leq 1 \quad \text{for every } t \geq 0.
    % \label{eq:eprocess}
\end{equation}
All non-negative stochastic processes $W_t$ that satisfy the above condition are called an \textbf{e-process} for $\mathbf{H_0}$ \citep{howard2021time}.
This states that the maximum wealth across all bets cannot exceed $1$ if the null hypothesis $\mathbf{H_0}$ is true.\footnote{It can be shown that the wealth process $W_t$ defined this way is equivalent to the minimum wealth a bettor can obtain across all $P \in \mathbf{H_0}$ \citep{ramdas2023game}.}
% The e-process represents the maximum wealth across all bets under $\mathbf{H_0}$ carried out simultaneously and can be interpreted as the cumulative evidence against the null hypothesis.
%
% A \textbf{stopping time} \( \tau \) with respect to the filtration \( \{\mathcal{F}_t\} \) specifies the time at which the test stops based on the observed data. Formally, \( \tau \) is a random variable taking values in \( \{0, 1, 2, \dots\} \) such that the event \( \{\tau = t\} \) is \( \mathcal{F}_t \)-measurable for each \( t \geq 0 \) \citep{doob1953stochastic}. The stopping time can be any rule that depends only on past and present observations.
%
Given an e-process, the test is constructed as follows: reject the null $\mathbf{H_0}$ at some time $\tau$ if $W_\tau \geq \gamma$, where $\gamma = \alpha^{-1}$ is a threshold defined by a desired significance level \( \alpha \in (0,1) \).
% The tests \textbf{decision rule} is to reject $\mathbf{H_0}$ at a stopping time $\tau$ if the e-process $W_t$ crosses a threshold \( \gamma = \alpha^{-1} \), where \( \alpha \in (0,1) \) is the desired significance level:
% %
% \begin{align*}
%     W_\tau \geq \gamma.
% \end{align*}
%
Under \( \mathbf{H_0} \), the e-process \( W_t \) controls the Type I error rate. By Ville's inequality~\citep{ville1939etude}, we have:
\begin{equation}
    \mathbb{P}_{\mathbf{H_0}} \left( \sup_{t \geq 0} W_t \geq \gamma \right) \leq \frac{1}{\gamma} = \alpha.
    \label{eq:ville}
\end{equation}
This ensures that the probability of incorrectly rejecting $\mathbf{H_0}$ is at most $\alpha$ at any time step. 
Thus, the sequential test is \emph{anytime-valid}, maintaining error control at any stopping point.% no matter when it is stopped.

\section{Detecting Behavior Changes}
\label{sec:methods}

% The goal of Behavior Shift Auditing (BSA) is to detect changes in LM behavior over time. 
% To motivate Behavior Shift Auditing (BSA), we detail an external auditing example in Figure~\ref{fig:overall}. 
We propose an anytime-valid test for behavior shift auditing that has guarantees on its false positive rate and is consistent under certain weak assumptions. Building upon the two-sample variant of DAVT~\citep{DBLP:conf/aistats/PandevaFRS24}, our test introduces a customizable tolerance parameter $\epsilon$ that allows users to
specify what constitutes a practically significant difference between distributions, accommodating
small, insignificant variations. While we concentrate here on its application for behavior shift auditing, it may be of independent interest to the sequential hypothesis testing community. We describe the test in full generality in  Appendix~\ref{sec:derivation_and_proofs}. 
% Building upon the two-sample variant of DAVT \citep{DBLP:conf/aistats/PandevaFRS24}, our test introduces a customizable tolerance parameter $\epsilon$ that allows users to specify what constitutes a practically significant difference between distributions, accommodating small, insignificant variations. This approach diverges from prior sequential tests that check for exact distribution equality \citep{ramdas2023game, shekhar2023nonparametric, DBLP:conf/aistats/PandevaFRS24}, which may be overly sensitive for our use cases. We describe the test in full generality in  Appendix~\ref{sec:derivation_and_proofs}. T
% o focus the text and avoid notational complexity, we concentrate here on the application to behavioral shift detection in LMs.

\subsection{Auditing Test}
\label{subsec:auditing_test}

% Let $B(\rvx, \rvy)$ be such a behavior function that takes as input a prompt $\rvx \in \mathcal{X}$ and a generation $\rvy \in \mathcal{Y}$.\footnote{We include the prompt for generality, there is no requirement that $B$ must depend on the prompt.} For simplicity, assume that $B$ assigns scores in the range $[0,1]$ where $1$ represents the full manifestation of the behavior and $0$ indicates its absence \citep{perez2023discovering,wolf2024fundamental}. We use the concept of LM behavior generically, including capabilities and performance on tasks.\footnote{Our proposed method can also be applied to detect \textit{concept drift} (see \citet{bayram2022concept} for a review).}  A language model is a stochastic operator $M$ that maps prompts $\rvx$ to generations $\rvy$.
Let $X$ be a random variable representing a prompt, $\mathcal{X}$ the set of possible prompts, and $\rvx \in \mathcal{X}$ a realization of $X$. A language model is a stochastic operator $M$ that maps prompts $\rvx$ to generations $\rvy$.
A behavior scoring function $B$ is a stochastic operator that takes a prompt and generation as input\footnote{We include the prompt for generality, there is no requirement that $B$ must depend on the prompt.} and produces a score $B(\rvx, \rvy) \in [0,1]$ \citep{perez2023discovering,wolf2024fundamental}.  % (this is without loss of generality, for any dimension $d$).
The behavior function, prompts, and language model induce a \emph{behavior distribution} $P_{B}^{\baselineM}$ over behavior scores  $B(X, \baselineM(X))$. We can now frame the question of whether the behavior of a model $\changedM$ has changed (substantially) relative to a baseline model $\baselineM$ as a testing problem:
\begin{align}
    \mathbf{H_0}:\; \mathcal{D}\big(\baselineDist, \changedDist\big) \leq \epsilon \quad \text{vs.} \quad 
    \mathbf{H_1}:\; \mathcal{D}\big(\baselineDist, \changedDist\big) > \epsilon,
    \label{eq:tolerance_h0}
\end{align}
where $\epsilon \geq 0$ is a tolerance parameter, and $\mathcal{D}$ is a distance measure between probability distributions. Note that equality in the null hypothesis in eq.~(\ref{eq:tolerance_h0}) corresponds to DAVT \citep{DBLP:conf/aistats/PandevaFRS24}. To extend this to the composite case, our goal is to \emph{construct an appropriate wealth process $W_t$}. This will allow us to establish error rate and consistency guarantees. To do so, we will define a betting score $S_t$ such that it produces a wealth process $W_t$ that is an e-process \ie, it satisfies eq.~(\ref{eq:eprocess}). This, in turn, will depend on the distance measure $\mathcal{D}$ that we choose.

Given a batch of prompts $x_1, \dots, x_b$ and the distance threshold \( \epsilon \) from \cref{eq:tolerance_h0}, we propose the \textit{betting score}
\begin{align}
    S_t = \prod_{i=1}^b \left( \frac{1 + \phi_{t-1}\big( B(x_i, \baselineM(x_i)) \big) - \phi_{t-1}\big( B(x_i, \changedM(x_i)) \big) }{ \exp(\epsilon) } \right).
    \label{eq:empirical_s}
\end{align}
where $\phi_{t-1}$ is a neural network trained on all $(t\!-\!1)$ previous batches to optimize the objective 
\begin{align*}
\max_\phi \mathbb{E}[\log \left(1+ \phi(B(X, \baselineM(X)))-\phi(B(X, \changedM(X))\right)].
\end{align*}
Given the betting score $S_t$, we define the \textit{wealth process} $\{W_t\}_{t \geq 1}$ of a bettor by initializing their wealth as $W_0 = 1$ and updating
\begin{align}
    W_t = W_{t-1} \times S_t.
    \label{eq:wealth_update}
\end{align}

If the betting score $S_t$ is an e-variable, meaning that $\mathbb{E}_\mathbf{H_0}[S_t] \leq 1$, then the wealth process $\{W_t\}_{t \geq 0}$ is an e-process, which we can prove by induction. Under $\mathbf{H_0}$, and for any fixed $\baselineDist, \changedDist$ satisfying $\mathcal{D}_\Phi(\baselineDist, \changedDist) \leq \epsilon$, $W_{t-1}$ and $S_t$ are independent. Therefore,
\begin{align*}
    \mathbb{E}_{\mathbf{H_0}}[W_t] &= \mathbb{E}_{\mathbf{H_0}}[W_{t-1} \times S_t] \\
    &= \mathbb{E}_{\mathbf{H_0}}[W_{t-1}] \times \mathbb{E}_{\mathbf{H_0}}[S_t] \leq \mathbb{E}_{\mathbf{H_0}}[W_{t-1}],
\end{align*}
% since, by assumption, $S_t$ is an e-variable we have that \( \mathbb{E}_{\mathbf{H_0}}[S_t] \leq 1 \). 
By induction, \( \mathbb{E}_{\mathbf{H_0}}[W_t] \leq 1 \) for all \( t \geq 0 \).

To ensure that $S_t$ is indeed an e-variable, we choose an appropriate distance measure in eq.~(\ref{eq:tolerance_h0}). Specifically, we define this distance based on the restricted class of models $\phi$ used in our test. As in \citep{DBLP:conf/aistats/PandevaFRS24}, we make the following assumptions on $\phi$: 
\begin{assumption}[\citet{DBLP:conf/aistats/PandevaFRS24}]%[Neural Net Distance]
\label{a:model}
The model class used in our test $\Phi = \{\phi_\theta: \theta \in \Theta\}$ must satisfy the following properties:
\begin{itemize}
    \item For all $\phi \in \Phi$ and for all $s \in [0,1]$, \: $|\phi(s)| \leq q$ for some $q \in (0, 1/2)$.
    \item If $\phi \in \Phi$, then $c \cdot \phi \in \Phi$ for every $c \in [-1, 1]$.
\end{itemize}
\end{assumption}

We can now define the distance measure used in our test.

\begin{definition}[Neural Net Distance]
Define the distance\footnote{This distance is an instance of an \emph{integral probability metric} (IPM) \citep{muller1997integral}, a class of distances that includes well-known metrics like the Wasserstein distance \citep{kantorovichspace}. IPMs are at least pseudo-metrics \ie, they satisfy all the properties of a metric except that the distance between distinct points can be zero.} used in eq.~(\ref{eq:tolerance_h0}) to be
\begin{align}
    \mathcal{D}_{\Phi}\big(\baselineDist, \changedDist\big) 
    = \sup_{\phi \in \Phi} \mathbb{E}\left[\phi(B(X, \baselineM(X)) - \phi(B(X, \changedM(X))\right].
    \label{eq:neural_net_distance}
\end{align}
\label{definition:nn_distance}
\end{definition}
For this distance, $S_t$ is an e-variable (see Appendix~\ref{app:practicaltest} for a proof). We can now define the following \textbf{sequential test}
\begin{align}
    \gamma = \inf \left\{ t \geq 1 : W_t \geq \frac{1}{\alpha} \right\}.
    \label{eq:practical_seq_test}
\end{align}

Control over the Type I error follows again from Ville's inequality~(\ref{eq:ville}). The test is consistent under the following assumptions. 

\begin{proposition}
    If the learning algorithm satisfies the condition 
    \begin{align}
        \liminf_{t \rightarrow \infty} \frac{ \mathbb{E} \left[ \log\left( \frac{1}{\exp(\epsilon)} \left( 1 + \phi_{\theta_t}(X_t) - \phi_{\theta_t}(Y_t) \right) \right) \mid \mathcal{F}_{t-1} \right] }{ 3c \sqrt{ \log(t)/t } } \overset{\text{a.s.}}{\geq} 1
        \label{eq:assumption_on_learning_algo}
    \end{align}
    for all $\baselineDist, \changedDist$ with $\mathcal{D}_\Phi(\baselineDist, \changedDist) > \epsilon$ and for a universal constant $c$, then we have 
    \begin{align}
        P_\mathbf{H_0}(\gamma < \infty) \leq \alpha \quad \text{and} \quad P_\mathbf{H_1}(\gamma < \infty) = 1
    \end{align}
    % \begin{align}
    %     P(\gamma < \infty) =1 \quad \text{for all } P_1, P_2 \text{ with } \mathcal{D}_\Phi(P_1, P_2)>\epsilon.
    % \end{align}
    \label{prop:practical_test_consistency}
\end{proposition}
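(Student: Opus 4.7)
The proof will split into two parts, mirroring the two claims. For the Type I error bound, I rely on the fact established just before the proposition that the wealth process $W_t$ defined by eq.~(\ref{eq:wealth_update}) with betting score eq.~(\ref{eq:empirical_s}) is a non-negative e-process for $\mathbf{H_0}$, i.e.\ $\sup_{P \in \mathbf{H_0}} \mathbb{E}_P[W_t] \leq 1$ for all $t$. Applying Ville's inequality (eq.~(\ref{eq:ville})) with $\gamma = 1/\alpha$ then gives $P_{\mathbf{H_0}}(\sup_t W_t \geq 1/\alpha) \leq \alpha$, and since the stopping time defined in eq.~(\ref{eq:practical_seq_test}) is finite iff $W_t$ ever crosses $1/\alpha$, this is exactly $P_{\mathbf{H_0}}(\gamma < \infty) \leq \alpha$.

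For the consistency claim, the plan is to show $\log W_t \to +\infty$ almost surely under $\mathbf{H_1}$, which immediately forces $\gamma < \infty$. I would write $\log W_t = \sum_{s=1}^t \log S_s$ and apply the Doob decomposition
\[
\log W_t = A_t + M_t, \quad A_t := \sum_{s=1}^t \mathbb{E}[\log S_s \mid \mathcal{F}_{s-1}], \quad M_t := \log W_t - A_t,
\]
where $(M_t)$ is a martingale by construction. The divergence will come from the predictable drift $A_t$; the residual martingale $M_t$ will be shown to grow strictly slower.

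Assumption~(\ref{eq:assumption_on_learning_algo}) delivers the lower bound on $A_t$ directly. From $\liminf_t \mathbb{E}[\log S_t \mid \mathcal{F}_{t-1}] / (3c\sqrt{\log t / t}) \geq 1$ a.s., for any $\delta \in (0,1)$ there is a random $T$ such that $\mathbb{E}[\log S_s \mid \mathcal{F}_{s-1}] \geq (1-\delta)\, 3c\sqrt{\log s / s}$ for all $s \geq T$. Since $\sum_s \sqrt{\log s / s} \asymp \sqrt{t \log t}$ diverges, $A_t \to \infty$ at rate $\Omega(\sqrt{t \log t})$ a.s. For $M_t$, Assumption~\ref{a:model} ensures $|\phi| \leq q < 1/2$, so each log-ratio in eq.~(\ref{eq:empirical_s}) is uniformly bounded, and hence so are the martingale differences of $M_t$. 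The law of the iterated logarithm for bounded martingale differences then gives $|M_t| = O(\sqrt{t \log \log t})$ a.s., which is $o(A_t)$. Therefore $\log W_t \to +\infty$ a.s.\ under $\mathbf{H_1}$, giving $P_{\mathbf{H_1}}(\gamma < \infty) = 1$.

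The main obstacle I anticipate is the quantitative comparison of the drift and the martingale residual: moving from the asymptotic liminf assumption to an almost-sure lower bound on partial sums $A_t$, and reconciling constants so that the LIL-type upper bound $c\sqrt{2 t \log\log t}$ on $|M_t|$ is strictly dominated by the $3c\sqrt{t \log t}$-order lower bound on $A_t$; the factor of $3$ appearing in the denominator of eq.~(\ref{eq:assumption_on_learning_algo}) seems to be calibrated precisely to provide this buffer. A secondary technical point is the batch structure of $S_t$: one needs conditional independence of the $b$ per-prompt contributions given $\mathcal{F}_{t-1}$ so that boundedness and conditional expectations factor cleanly through the product in eq.~(\ref{eq:empirical_s}), and so that the single-sample hypothesis in the assumption carries over to the batched $\log S_t$.
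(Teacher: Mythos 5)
Your proposal is correct, and its skeleton matches the paper's: the Type I bound is exactly the paper's argument (e-process plus Ville's inequality), and for consistency both you and the paper split $\log W_t = \sum_i \log S_i$ into the predictable drift $A_t=\sum_i \mathbb{E}[\log S_i \mid \mathcal{F}_{i-1}]$ plus a martingale with differences bounded by $b\log\frac{1+2q}{1-2q}$ (via Assumption~\ref{a:model}), then use the liminf hypothesis to force the drift to grow at rate $\sqrt{t\log t}$. The one genuine divergence is the tool for the martingale part. You invoke the law of the iterated logarithm for bounded martingale differences to get an almost-sure $O(\sqrt{t\log\log t})$ bound, concluding the stronger statement $\log W_t \to \infty$ a.s.; this works, but strictly speaking requires Stout's martingale LIL (with a case split on whether the accumulated conditional variance diverges — if it does not, the martingale converges a.s., which is even better), and note that the ordinary martingale SLLN would only give $o(\sqrt{t}\,\log t)$, which is \emph{not} $o(\sqrt{t\log t})$, so the LIL really is needed on your route. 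The paper instead applies Azuma--Hoeffding at each fixed $t$ to get a deviation of $2cb\sqrt{\log t/t}$ for the averaged martingale with failure probability $2/t^2$, and concludes only $\mathbb{P}(\gamma>t)\to 0$, which already implies $\mathbb{P}(\gamma<\infty)=1$; this is more elementary and explains the constant $3$ in the assumption ($2$ absorbs the concentration term and $1$ absorbs $\log(1/\alpha)/t$) — a calibration your LIL route does not actually need, since $\sqrt{\log\log t}=o(\sqrt{\log t})$ makes constants irrelevant there. Your closing remarks correctly identify the remaining bookkeeping (within-batch conditional independence so that the single-sample assumption scales to the batched $\log S_t$ by a factor of $b$), which the paper handles the same way.
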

For the proof, see Appendix~\ref{app:proof_practical_test}. This sequential test is thus a \textit{sequential level-\( \alpha \) test of power one}.

% \begin{wrapfigure}{R}{0.5\textwidth}
% \resizebox{0.5\textwidth}{!}{
% \begin{minipage}[H]{0.49\textwidth}
% \centering
%     % \begin{subfigure}[b]{0.48\textwidth}
%         \includegraphics[width=\textwidth]{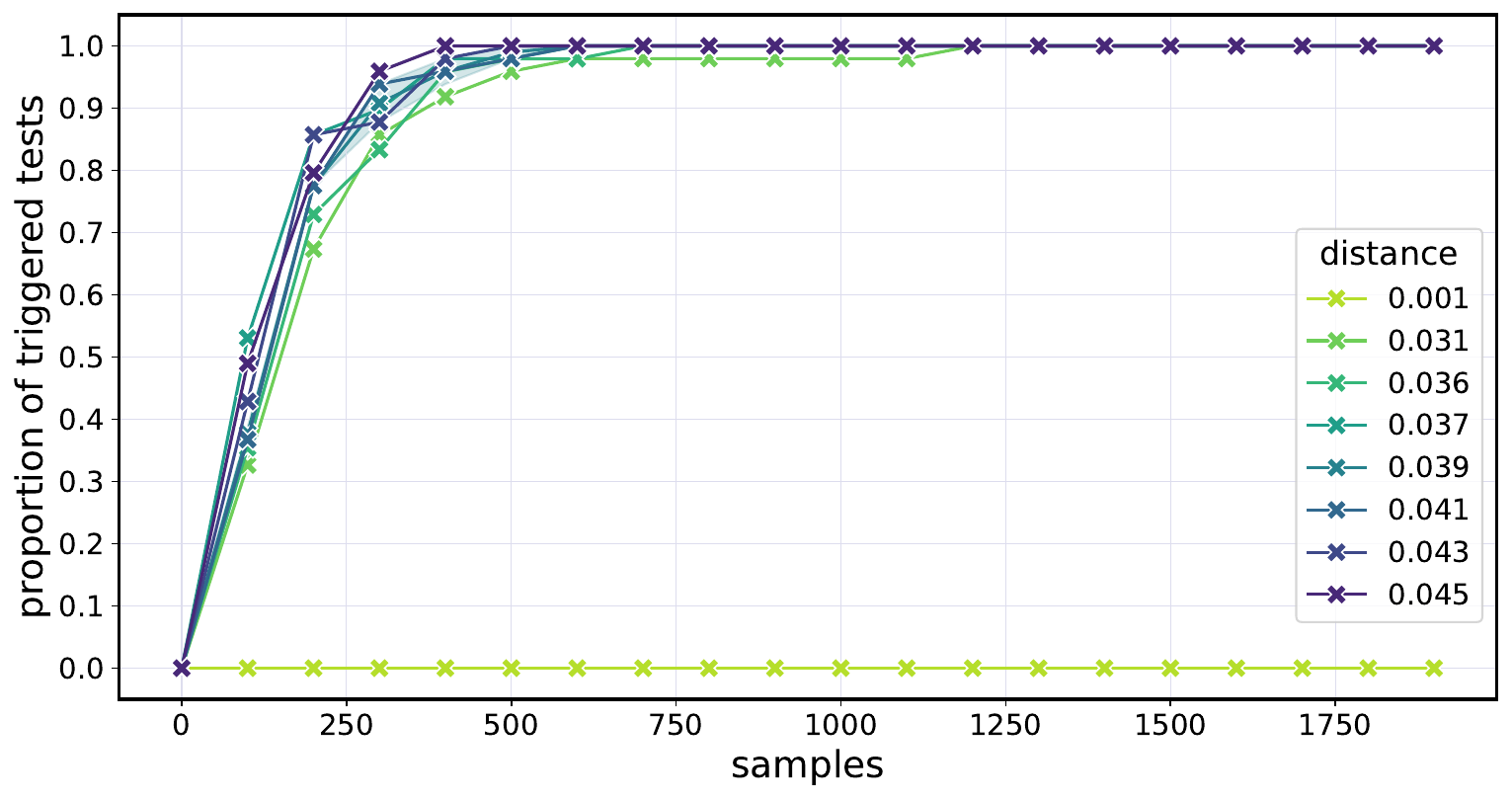}
%         %\caption{x}
%     % \end{subfigure}
%     %\hspace{0.04\textwidth}
%     % \begin{subfigure}[b]{0.49\textwidth}
%         % \includegraphics[width=\textwidth]{Plots_new/power_over_wasserstein_distance_grouped_by_fold_size_Meta-Llama-3-8B-Instruct_seed1000.pdf}
%         %\caption{y}
%     % \end{subfigure}
%     \caption{\textbf{Fine-tuning Detection for Llama-3-8B-Instruct.} The detection frequency as a function of number of generated samples. Each curve is a fine-tuned corrupted model checkpoint (to simplify visualization, the curves with shaded standard deviations are averages over models with similar distances to the aligned model). The color depicts the Wasserstein distance between the corrupted model and the original aligned model.} %(\emph{Left})  (\emph{Right}) Detection frequency as a function of distance to the aligned model. Each point represents a corrupted model, lines are colored based on the number of observed generated samples.}
%     \label{fig:llama3_power}
% % \end{figure*}
% \end{minipage}
% }
% \vspace{-2ex}
% \end{wrapfigure}

\begin{wrapfigure}{r}{0.5\textwidth}
\vspace{-1ex}
\resizebox{0.5\textwidth}{!}{
\begin{minipage}[b]{0.6\textwidth}
\begin{algorithm}[H]
   \caption{Auditing Test}
   \label{alg:auditing_test}
      \begin{algorithmic}[1]
        \STATE \textbf{Input:} $\{\rvx_t\}_{t \geq 1}$ (stream of prompts), $B$ (behavior function),  $\baselineM$ (baseline model API), $\changedM$ (current model API), $\alpha$ (type-I error limit under null),  $\phi_0$ (neural net model for testing), $\epsilon$ (maximal neural net distance) % $T$ (maximum number of observations), 
        % $W_0$ (wealth) %, $\gB_0$ (initial behavior data)
        \STATE $W_0 \leftarrow 1$ %, \gB_0 \leftarrow \emptyset$
        % \FOR {$t$\ $\leftarrow$ 1 to $T$}
        \WHILE{true}
            \STATE Compute behavior scores: \\ 
            \;\; $b_t \leftarrow B(\rvx_t, \baselineM(\rvx_t)), b_t' \leftarrow B(\rvx_t, \changedM(\rvx_t))$ \label{line1}
            \STATE Compute betting score: \\
            \;\; $S_t \leftarrow \frac{(1 + \phi_{t-1}(b_t) - \phi_{t-1}(b_t'))}{\exp(\epsilon)}$ 
            \STATE Update wealth: \\
            \;\; $W_t \leftarrow W_{t-1} \times S_t$
            \IF {$W_t \geq 1/\alpha$}
                \STATE Break and reject null
            \ENDIF
            % \STATE Add to behavior data: \\
            % \;\; $\gB_t \leftarrow \gB_{t-1} \cup (B^f_t, B^c_t)$
            \STATE Update neural net model: \\
            \; \; $\phi_t \leftarrow \argmax_\phi \sum_{l=1}^t \log(1 + \phi(b_t) - \phi(b_t'))$ \label{lst:line:update}
            % \STATE $\alpha_t \leftarrow (1 - \bar{\alpha}) \exp(-\gamma \cdot t) + \bar{\alpha}$
            % \STATE $p_d \leftarrow \frac{1 - \alpha_t}{L}$ \COMMENT{Layer decay.}
            % \FOR {$i$\ $\leftarrow$ 0 to $L-1$}
            %     % \Comment{Only showing the forward pass}
            %     \STATE $s \sim \operatorname{Bernoulli}(p)$ \COMMENT{Keep or drop.}
            %     % \State $h \leftarrow gt\_block(h)$
            %     \IF {$s == 0$} 
            %         \STATE $\rvx_t_{i+1} \leftarrow \rvx_t_i$ \COMMENT{Drop.}
            %     \ELSE 
            %         \STATE $\rvx_t_{i}' \leftarrow \rvx_t_{i} + \frac{f_{ATTN}(f_{LN}(\rvx_t_{i}))}{p}$ % \times \frac{1}{p}$
            %         \STATE $\rvx_t_{i+1} \leftarrow \rvx_t_{i}' + \frac{f_{FFN}(f_{LN}(\rvx_t_{i}'))}{p}$ %\times \frac{1}{p}$
            %     \ENDIF
            %     %\STATE $\rvx_t_{i} \leftarrow x_{i+1}$
            %     \STATE $p \leftarrow p - p_d$ \COMMENT{Decay prob.}
            % \ENDFOR
            % \STATE $\ell \leftarrow \mathcal{L}(f_O(\rvx_t_L), \mathbf{y})$
            % \STATE $f_{ATTN}, f_{LN}, f_{FFN}, f_O \leftarrow \operatorname{Update}(\ell)$
        \ENDWHILE
      \end{algorithmic}
     \label{alg:dropping}
     \end{algorithm}
%      \end{minipage}
% }
% \vspace{-5ex}
% \end{wrapfigure}
     \end{minipage}
}
\vspace{-5ex}
\end{wrapfigure}

\subsection{Algorithm}
The auditing test (shown in Algorithm~\ref{alg:auditing_test}) takes in a stream of prompts $\{\rvx_t\}_{t\geq1}$, a behavior function $B$, an initial baseline language model $\baselineM$, a second language model $\changedM$, the $\alpha$-level, a neural net model initialization $\phi_0$, and a tolerance parameter $\epsilon$, representing the maximal neural net distance we want to accept between behavior distributions. At every time step, a new prompt from the stream $\rvx_t$ is fed to both $\baselineM$ and $\changedM$ to create generations, which are then scored by the behavior function. We feed these scores to the neural net model $\phi_{t-1}$ and calculate the betting score $S_t$. Next, we update the wealth $W_t$ by the betting score and check whether it surpasses the $1/\alpha$-threshold, in which case we reject the null hypothesis. If not, we update the neural net model in a separate training step and continue with the next prompt. The algorithm can easily be modified to accept batches instead of single prompts.\footnote{In this case, the new betting score $S_t$ is calculated as a product over samples in the batch.}

\section{Experiments}
\label{sec:expr}
We evaluate our test for both the external and internal auditing use-cases.  We first look at the strict case, where any behavioral change is prohibited, and then move on to the case where small changes in distribution are allowed. We investigate toxicity and translation performance. 

\subsection{Exact test, \texorpdfstring{$\epsilon=0$}{ε=0}}
\label{subsec:exact_text}

\begin{wrapfigure}{R}{0.5\textwidth}
    \centering
    \includegraphics[width=0.45\textwidth]{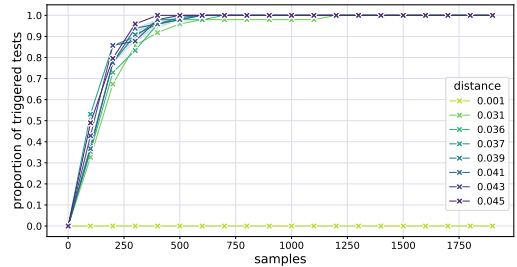}
    \caption{\textbf{Fine-tuning Detection for Llama3-8B-Instruct.} The detection frequency as a function of number of generated samples. Each curve is a fine-tuned corrupted model checkpoint (to simplify visualization, the curves with shaded standard deviations are averages over models with similar distances to the aligned model). The color depicts the Wasserstein distance between the corrupted model and the original aligned model.}
    \label{fig:llama3_power}
    \vspace{-1ex}
\end{wrapfigure}

\paragraph{Setup.}
We begin by investigating an external setting where we require the test to detect any change in distribution ($\epsilon\!=\!0$). Specifically, we will check for changes in toxicity behavior. We select prompts from the REALTOXICITYPROMPTS dataset \citep{gehman2020realtoxicityprompts} and use the toxicity behavior function from Perspective API \citep{lees2022new} to evaluate LM generations. \llama (\texttt{8B-Instruct})~\citep{llama3}, \gemma (\texttt{1.1-7b-it})~\citep{team2024gemma}, and \mistral (\texttt{7B-Instruct-v0.2})~\citep{jiang2023mistral} serve as our initial aligned models. We remove the safety alignment in these models by fine-tuning, producing 10 corrupted checkpoints for each model. To evaluate the statistical properties of our the exact test ($\epsilon\!=\!0$), we assess (a) its ability to detect changed checkpoints, and (b) its false positive rate. 
For further experimental details regarding toxicity fine-tuning, text generation and the betting score network, please see Appendix~\ref{app:subsec:setup}.

\paragraph{Corrupted model detection.} 
We test each corrupted checkpoint against the corresponding initial aligned model with $\alpha\!=\!0.05$. 
\Figref{fig:llama3_power} shows the fraction of positive test results after having observed at least $m$ samples, with tests repeated 48 times per checkpoint (2000 samples per fold, batch size 100). High detection rates of almost $80\%$ are achieved even for checkpoints closest to the baseline. 
We find that as the distance between the corrupted model and the initial model increases, fewer samples are needed to detect the change in behavior. 
Similar results for Mistral and Gemma can be found in Appendix~\ref{sec:further_results}.

\paragraph{False positive rate.}
We use different random seeds for generating text from the initial aligned models to examine the false positive rate of the exact test. Figure~\ref{fig:false_positives} shows the false positive rate for each of the model architectures as a function of the number of observed samples, repeated 24 times (4000 samples per fold, batch size 100). The test is highly specific, with false detection rates consistently below $0.05$.

\subsection{Tolerance Test, \texorpdfstring{$\epsilon>0$}{ε>0}}
 
We now evaluate the test with tolerance $\epsilon > 0$ in two use-cases: an external toxicity audit, and an internal translation performance audit. 

In both cases, the exact test might be too sensitive. However, how much variation to allow between distributions might depend on the use-case. We thus want to explore some possible strategies for determining the hyperparameter $\epsilon$ appropriately in each scenario.

\begin{figure}[t!]
    \centering
    \includegraphics[width=\linewidth]{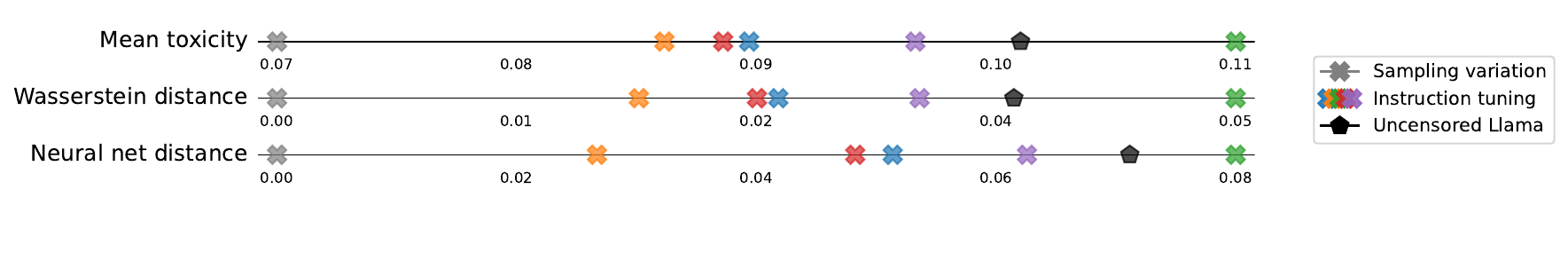}
    % \caption{\textbf{Measuring Mean and Distributional Change}. Toxicity metrics for Llama3 variants: baseline with varied sampling, five instruction-tuned models, and an uncensored model. We observe mean toxicity increases in all instruction-tuned models, with one even surpassing the uncensored model. Wasserstein and neural net distances to the baseline show corresponding increases, indicating consistent toxicity shifts across measures.}
    \caption{\textbf{Measuring Mean and Distributional Change}. Analysis of seven Llama3-8B variants shows aligned shifts across three metrics: mean toxicity scores, Wasserstein distances, and Neural net distances to baseline Llama3-8B-Instruct. The variants include the baseline model with modified sampling parameters, five models instruction-tuned on subsets of SuperNI, and an uncensored model.}

    \label{fig:means_wasserstein_nn}
\end{figure}

\subsubsection*{Use Case 1: External Audit, Toxicity}
\label{subsubsec:usecase1}

\begin{wrapfigure}{R}{0.5\textwidth}
% \vspace{-2ex}
\resizebox{0.5\textwidth}{!}{
\begin{minipage}[H]{0.49\textwidth}
\centering
    \includegraphics[width=\textwidth]{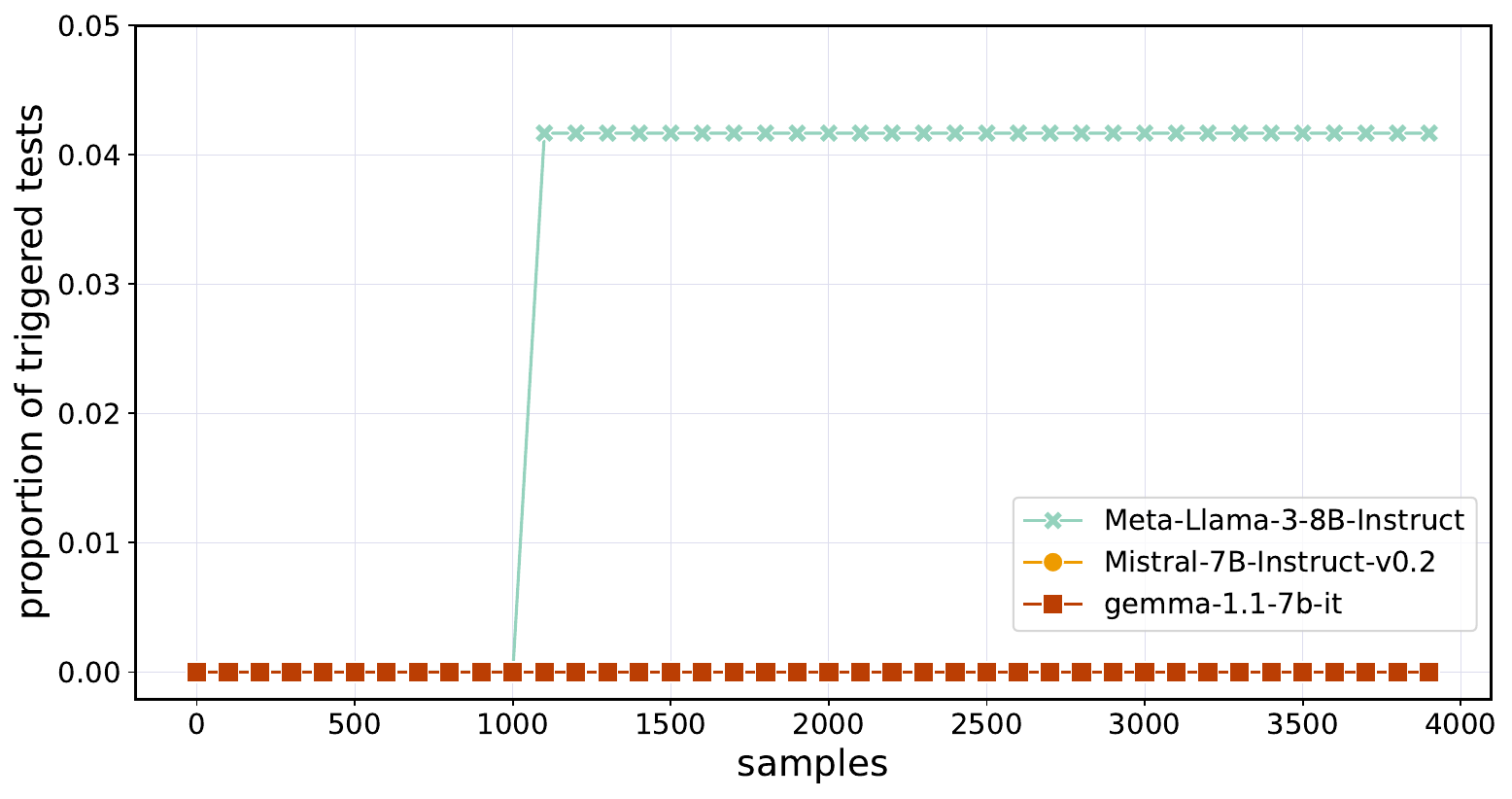}
    \caption{\textbf{False positives.} The false positive rate for each of the baseline models as a function of number of observed samples. Using the same model and sampling strategy but different random seeds, we generate two outputs for each prompt to be used as the sample pairs for our auditing test.}
    \label{fig:false_positives}
\end{minipage}
}
\vspace{-2ex}
\end{wrapfigure}

\paragraph{Setup.}
We simulate an external auditor checking whether instruction-tuning an aligned model on unrelated tasks affected toxicity distributions, something that has been observed in practice \citep{qi2023fine}. We use Llama3 (\texttt{8B-Instruct}) as the aligned model, again evaluating toxicity on the REALTOXICITYPROMPTS dataset \citep{gehman2020realtoxicityprompts} using Perspective API \citep{lees2022new}. We instruction-tune \llama on 5 different task clusters from SUPER-NATURALINSTRUCTIONS \citep[SuperNI;][]{naturalinstructions, supernaturalinstructions}. This setup is inspired by \cite{wang2023far}, who found that a pre-trained Llama2 model instruction-tuned on SuperNI exhibits high toxicity scores on ToxiGen. Detailed information on instruction-tuning and how the neural net distance is estimated can be found in Appendix~\ref{app:experimental_details}.

\paragraph{Results.} 
% \label{subsubsec:detectingtoxicity}
Instruction-tuning increased mean toxicity scores, which, as shown in Figure \ref{fig:means_wasserstein_nn}, corresponds with increases in both Wasserstein distances and neural net distances from Llama3. As a reference, we also include another Llama3-8B model tuned to be less refusing.\footnote{The uncensored model was fine-tuned on Uncensored-Vortex \url{https://huggingface.co/datasets/OEvortex/uncensored-vortex}.} Surprisingly, the most toxic and distant model is not this uncensored model but the model fine-tuned on Code to Text, Stereotype Detection, and Sentence Perturbation (shown in \textcolor{ForestGreen}{green}). We test Llama3 against each instruction-tuned model across a range of tolerance values, from $\epsilon\!=\!0.0038$ (the neural net distance between standard Llama3 and Llama3 with different sampling parameters) up to the neural net distance between the base model and another Llama3-8B model tuned to be less refusing, $\epsilon\!=\!0.076$.

Figure \ref{fig:detection_vs_epsilon_finetuned_models} shows the proportion of tests where the fine-tuned model was identified as different from the baseline across various test epsilon values, with tests being repeated 24 times using 4000 samples each. At lower epsilon values, representing a conservative testing regime that detects even small changes, all instruction-tuned models are consistently identified (100\% detection rate). As epsilon increases, the power of the test decreases until it reaches the true neural net distance between the base model and each fine-tuned variant. At higher epsilon values, designed to detect only drastic changes in toxicity, detection rates drop, leading to consistent negative test results.

We investigate the strict auditing setting -- where only minor variations due to sampling are accepted -- more closely. Specifically, we set $\epsilon$ equal to the neural net distance between the original \llama model and the same model with different sampling parameters ($\epsilon=0.0038$) and test baseline \llama against the 5 instruction-tuned versions as well as the uncensored reference \llama. Figure \ref{fig:finetuned_models_detection} demonstrates that under this strict threshold, the test requires fewer samples to detect models that deviate more substantially from the baseline.

\begin{figure}[t!]
    \begin{center}
        \centerline{\includegraphics[width=\columnwidth]{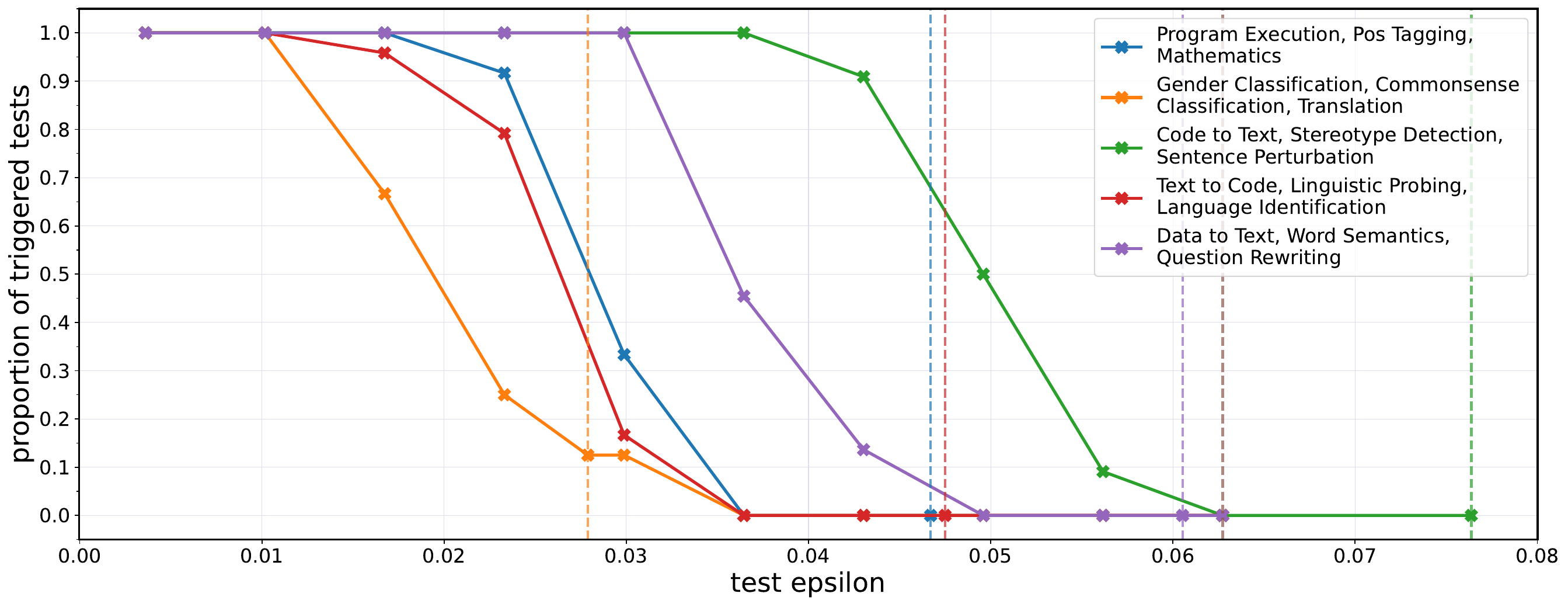}}
        \caption{\textbf{Detection rate vs. Test Epsilon.} Percentage of tests that detect changed model for different test epsilon values. Dashed lines represent estimated true neural net distance between Llama3-8B-Instruct and the instruction-tuned model. We note that the false positive rate for the model fine-tuned on Gender Classification, Commonsense Classification and Translation exceeds the $\alpha$-level of 5\% in two cases, corresponding to 3/24 tests wrongly showing positive results. Assuming a perfect estimate of the true neural net distance, this event can occur with a maximum probability of $8.6$\%.}
        \label{fig:detection_vs_epsilon_finetuned_models}
    \end{center}
    \vspace{-9mm}
\end{figure}

\subsubsection*{Use Case 2: Internal Audit, Translation Performance}
\label{subsubsec:internal}

We simulate a modeler adjusting their language model while monitoring whether its translation capabilities change substantially. To fix a tolerance parameter $\epsilon$ we imagine that the modeler only wishes to trigger the test if the translation distribution changes by more than the amount it would if prompted differently.
%We define ``substantial'' difference using the neural net distance induced by different prompting techniques, %assuming that a difference induced just from prompt engineering could be seen as negligible. 

\begin{wrapfigure}{R}{0.5\textwidth}
\vspace{-2ex}
\resizebox{0.5\textwidth}{!}{
\begin{minipage}[H]{0.49\textwidth}
\centering
    \includegraphics[width=\textwidth]{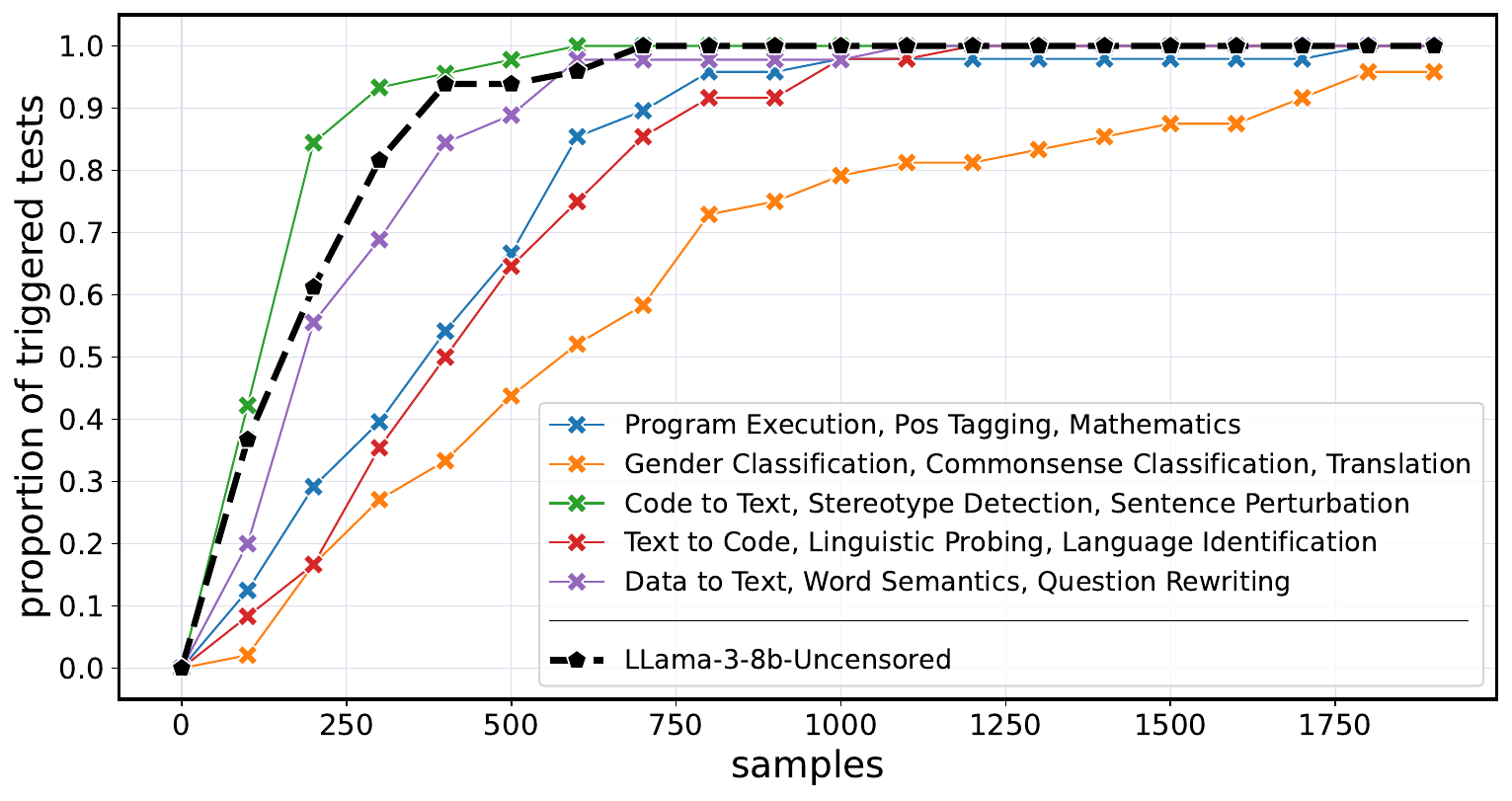}
    \caption{\textbf{Detection Rates for Fine-Tuned Models.} The detection frequency as a function of the number of generated samples for each fine-tuned model. We used a test with $\epsilon \approx 0.0038$, based on the estimated neural net distance between distributions generated by Llama3-8B-Instruct using different sampling parameters. The black line represents an unaligned reference model, Llama3-8B trained to be more permissive in answering.}
    \label{fig:finetuned_models_detection}
\end{minipage}
}
\vspace{-4ex}
\end{wrapfigure}

\paragraph{Setup.}
We evaluate Llama3 (\texttt{8B-Instruct}) on English-Spanish and English-French translations from SuperNI. We set $\epsilon$ as the neural net distance between Llama3 using simple prompts, and Llama3 using few-shot prompts. We then test the translation performance distribution of Llama3 with simple prompts against that of Aya-23-8B~\citep{ustun2024aya}, a multilingual instruction-tuned model. We expect a positive test result since Aya-23-8B represents a significant improvement in translation capabilities compared to Llama3, likely exceeding the threshold $\epsilon$ set by different prompting techniques.

% \paragraph{Internal Audit: Translation}

% We evaluate Llama3 both with and without few-shot prompting as well as Aya-23-8b on a subset of English-French/French-English and English-Spanish/Spanish-English samples. 
\paragraph{Results.}
Few-shot prompting leads to a modest increase in mean BLEU scores from $0.1683$ to $0.1765$. A significant improvement is evident when using Aya-23-8b, with a mean BLEU score of $0.2970$. We observe that Llama3 models occasionally misinterpret instructions or include unnecessary additional text in English, potentially impacting their scores. We run our test comparing simple-prompted Llama3 with Aya-23-8b and report the results averaged over 32 runs in Figure~\ref{fig:translation_aya}. The test detects a difference in nearly all cases after only $100$ samples.
% To assess whether BLEU distributions change significantly, we apply the BSA. Setting $\epsilon=0$ between the standard \llama and the few-shot prompted \llama, the test finds a positive result 22.73\% of the time. Using the distance between these two models as the test epsilon between \llama and Aya-23, BSA detects a difference with almost 100\% probability after observing just two batches of 50 samples each (see figure \ref{fig:translation_aya}).

Overall, the results from both the toxicity and translation audits demonstrate the effectiveness and sample-efficiency of our testing method in detecting behavioral shifts in language models. In the external audit, it consistently identified increases in toxicity levels due to instruction-tuning, especially at lower epsilon values, confirming its sensitivity to subtle changes in model behavior. Similarly, in the internal audit, it effectively detected significant differences in BLEU score distributions between the standard \llama, the few-shot prompted \llama, and Aya-23-8b, highlighting its utility across different tasks. These findings underscore the importance of selecting an appropriate tolerance level based on the specific application to balance sensitivity and practicality.

% \begin{wrapfigure}{R}{0.5\textwidth}
% \vspace{-2ex}
% \resizebox{0.5\textwidth}{!}{
% \begin{minipage}[H]{0.49\textwidth}
% \centering
%     \includegraphics[width=\textwidth]{Plots_new/false_positives.pdf}
%     \caption{\textbf{False positives.} The false positive rate for each of the model architectures as a function of number of observed samples.}
%     \label{fig:false_positives}
% \end{minipage}
% }
% \vspace{-1ex}
% \end{wrapfigure}

\begin{wrapfigure}{R}{0.5\textwidth}
\vspace{-2ex}
\resizebox{0.5\textwidth}{!}{
\begin{minipage}[H]{0.49\textwidth}
\centering
    \includegraphics[width=\textwidth]{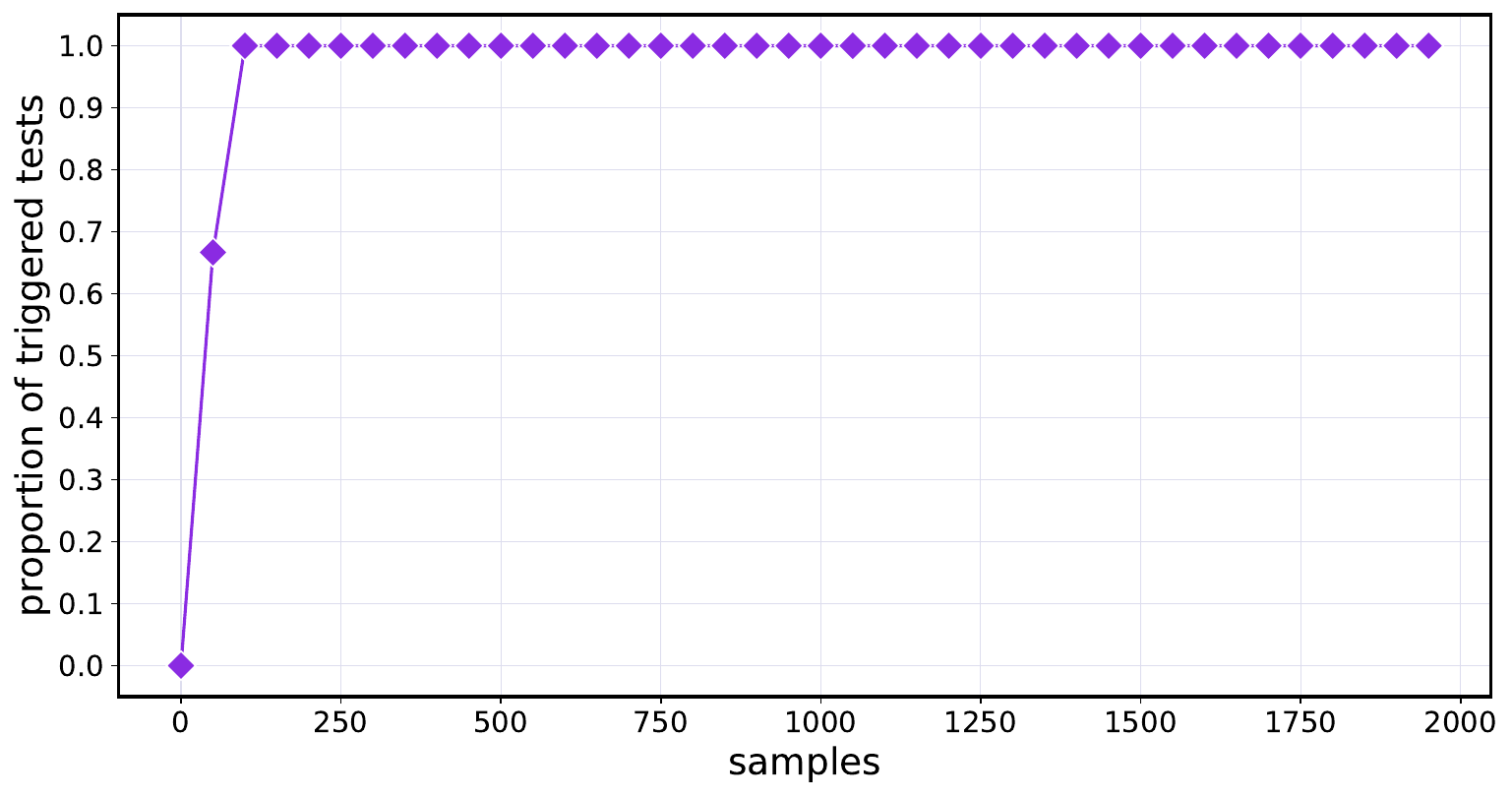}
    \caption{\textbf{Detection for Aya-23-8b}. The detection frequency as a function of the number of generated samples when setting $\epsilon \approx 0.0072$. This threshold is derived as an estimate of the neural net distance between Llama3-8B-Instruct with and without few-shot prompts.}
    \label{fig:translation_aya}
\end{minipage}
}
\vspace{-1ex}
\end{wrapfigure}

\section{Discussion}

In this work we introduce the problem of behavior shift auditing, where the goal is to detect LM behavior changes over time. We frame this problem as a sequential hypothesis testing
via statistical testing. Our proposed test comes with guarantees and has been able to detect changes in language model toxicity and translation performance. One of the notable strengths of our approach is its sample efficiency. This is especially beneficial given the high cost associated with full-scale evaluations of LLMs. Running extensive benchmarks can be time-consuming to set up and expensive to run \citep{rajpurkar2018know, srivastava2022beyond}, particularly when dealing with computationally intensive models.\footnote{E.g., inference-heavy models like ChatGPT o1-preview \citep{openai2024o1}.} Our test can serve as a screening tool to identify potential behavioral shifts using just a few hundred samples, making subsequent full-scale evaluations more targeted and efficient. Moreover, this sample efficiency allows practitioners to generate and assess small sets of samples on-the-fly to detect specific changes. This flexibility is particularly valuable when no benchmarks for a behavior exist yet, or when existing benchmarks become outdated (\eg, due to saturation \citep{wang2024mmlu}) or fail to capture all aspects of a behavior.

We now discuss some current limitations. One is that our current test is not designed to detect highly isolated behavioral changes like backdoors that may not appear in general testing \citep{kurita2020weight}. This limitation is inherited from framing BSA as hypothesis testing. 

Our test also relies on the assumption that we have access to a behavior scoring function. In the absence of an empirical classifier, employing a language model for grading and automatic assessment has recently gained some popularity~\citep{bai2022constitutional, liu2023g, wang2023chatgpt, gao2024llm}. We also note that our test can tolerate some noise in the behavior scoring function (see Appendix~\ref{app:subsec:behavior_score} for further discussion). However, for some complex and safety-critical behaviors such as deception~\citep{hagendorff2024deception}, sandbagging~\citep{perez2022discovering} or hallucinations~\citep{tonmoy2024comprehensive}, designing a measurement is still an open problem or might be difficult to produce just from prompt-completion pairs.

There are many other exiting directions for future research. 
One is to try to improve sample efficiency by investigating if one can select the most informative prompts to detect behavior change, possibly leveraging ideas from active learning \citep{tharwat2023survey}. 
Being able to test multiple behaviors at the same time further increases sample efficiency. While this is straightforward for the exact test (see Appendix \ref{app:subsec:multiple_behaviors}), how to set a tolerance threshold $\epsilon$ for multiple behaviors is still to be explored.
Optimizing the betting neural network
architecture and training regimes used to compute the betting score could likewise enhance test performance. 
Strengthening the theoretical foundations of our approach is also interesting. Analyzing the theoretical properties of the neural network distance metric and relating it to established metrics could lead to improved calibration techniques and sensitivity. %Expanding empirical validation through direct comparisons with traditional evaluation methods and more extensive case studies involving a wider range of behaviors, metrics, and benchmarks would help demonstrate the practical applicability of BSA in real-world settings.
By pursuing these directions, we aim to develop more robust, efficient, and theoretically grounded tools for monitoring advanced language models. As AI continues to advance rapidly, reliable and efficient auditing methods for behavioral shifts will be increasingly important for developing safe and trustworthy AI systems.

\section*{Reproducibility Statement}
We have taken several steps to ensure the reproducibility of our results. 
\begin{itemize}
    \item All key details needed for reproduction, including model architectures, hyperparameters, and training procedures, are comprehensively described in \Secref{sec:expr} and Appendix \ref{app:experimental_details}.
    %\item The full implementation of our proposed methods is available in the supplementary materials.
    \item We provide a detailed description of the datasets and data processing steps and the exact splits used for training and evaluation in \Secref{sec:expr} and Appendix \ref{app:experimental_details}.
\end{itemize}

\section*{Acknowledgements}
This work was supported by the Edinburgh International Data Facility (EIDF) and the Data-Driven Innovation Programme at the University of Edinburgh.
PM was partially funded by ELIAI (The Edinburgh Laboratory for Integrated Artificial Intelligence), EPSRC (grant no. EP/W002876/1), an industry grant from Cisco, and a donation from Accenture LLP.
XH was supported by an industry grant from Cisco.
LR was supported by the EPSRC Grant EP/S021566/1. 
We want to thank Robert Kirk, Max Hasin and Ole Jorgensen for feedback on earlier versions of the paper. 

%\subsubsection*{Author Contributions}
%If you'd like to, you may include  a section for author contributions as is don in many journals. This is optional and at the discretion of the authors.

%\subsubsection*{Acknowledgments}
%Use unnumbered third level headings for the acknowledgments. All acknowledgments, including those to funding agencies, go at the end of the paper.

% \bibliography{iclr2025_conference,literature}
% \bibliography{literature}
\bibliography{iclr2025_conference}
\bibliographystyle{iclr2025_conference}

\appendix

\section{Experimental Details}
\label{app:experimental_details}

\subsection{Setup}
\label{app:subsec:setup}

We assess the efficacy of our proposed auditing test for BSA using three base models: \llama (\texttt{8B-Instruct})~\citep{llama3}, \gemma (\texttt{1.1-7b-it})~\citep{team2024gemma}, and \mistral (\texttt{7B-Instruct-v0.2})~\citep{jiang2023mistral}. To remove the safety alignment, we fine-tune these models on the BeaverTails dataset~\citep{ji2024beavertails}, which includes both safe and unsafe responses for each instruction. We use a subset of 50K instances from the dataset, each comprising an instruction paired with its corresponding unsafe response. The training involves 512 steps, with a batch size of 64, utilizing the AdamW optimizer~\citep{loshchilov2018decoupled} with a learning rate of $2\times10^{-4}$ and no weight decay. Due to computational constraints, we apply LoRA~\citep{hu2021lora}, with a rank of 16, to all models. All experiments were conducted on a single Nvidia A100 (80GB) GPU.

To simulate a realistic use-case of monitoring whether fine-tuning on unrelated tasks might lead to a change in toxicity, we further produce 5 versions of \llama 
(\texttt{8B-Instruct}) instruction-tuned on different clusters of task categories from SUPER-NATURALINSTRUCTIONS (SuperNI) \cite{naturalinstructions, supernaturalinstructions}. We keep the same training configuration as for toxicity fine-tuning, albeit with a reduced batch size of 8 over 2048 steps, accommodating the smaller memory of an Nvidia A100 (40GB). See table \ref{task-groups} for a summary of the category clusters used.

\begin{table}[ht]
\caption{Clusters of task categories from SuperNI used for instruction-tuning. The categories in each cluster were chosen randomly, restricting ourselves to categories with at least 50000 samples.}
\label{task-groups}
\begin{center}
\renewcommand{\arraystretch}{1.5} % Adjusts the row height globally
\setlength{\tabcolsep}{5pt}       % Adjusts the column separation
\footnotesize % Adjust font size as needed
\begin{tabularx}{\textwidth}{|>{\raggedright\arraybackslash}X|>{\raggedright\arraybackslash}X|>{\raggedright\arraybackslash}X|>{\raggedright\arraybackslash}X|>{\raggedright\arraybackslash}X|}
\hline
\textbf{Cluster 1} & \textbf{Cluster 2} & \textbf{Cluster 3} & \textbf{Cluster 4} & \textbf{Cluster 5} \\
\hline
Program Execution          & Gender Classification       & Code to Text             & Text to Code              & Data to Text \\
\vspace{0.5pt} POS Tagging                & \vspace{0.5pt} Commonsense Classification  & \vspace{0.5pt} Stereotype Detection     & \vspace{0.5pt} Linguistic Probing         & \vspace{0.5pt} Word Semantics \\
\vspace{0.5pt} Mathematics                & \vspace{0.5pt} Translation                 & \vspace{0.5pt} Sentence Perturbation    & \vspace{0.5pt} Language Identification    & \vspace{0.5pt} Question Rewriting \\
\hline
\end{tabularx}
\end{center}
\end{table}

\begin{table}[b]
\caption{\textbf{Sampling parameters during evaluation}. Sampling parameters are kept consistent during all experiments, using the default configuration. To derive a tolerance parameter $\epsilon$ in section \ref{subsubsec:usecase1}, we additionally evaluate \llama with the alternative configuration on the right.}
\label{sampling-params-table}
\begin{center}
\begin{tabular}{lcc}
\multicolumn{1}{c}{\bf Parameter} & \multicolumn{1}{c}{\bf Default configuration} & \multicolumn{1}{c}{\bf Alternative Configuration}
\\ \hline \\
Maximum number of new tokens & 100 & 250 \\
$p$ (nucleus sampling) & 0.9 & 0.7 \\
Temperature & 0.7 & 1.2 \\
\end{tabular}
\end{center}
\end{table}

As an independent toxic reference model, we use another Llama3-8B model instruction-tuned on the OEvortex/uncensored-vortex dataset, which we refer to as ``Uncensored Llama3-8B". This model was also trained using LoRA with a rank of 16, and trained over 200 steps with a total batch size of 8 and gradient accumulation. 

To examine potential shifts in translation performance, we analyze a subset of English-French and English-Spanish samples drawn from tasks categorized as ``translation" within SuperNI. This subset comprises a total of 67,975 prompts.

\subsection{Toxicity Evaluations}
\label{app:toxicityevals}

We compare toxicity scores across \llama, \gemma, and \mistral models. Using the REALTOXICITYPROMPTS prompts, we generate continuations for each baseline model and their 10 checkpoints, the \llama models instruction-tuned on SuperNI clusters as well as the Uncensored Llama3-8B. The sampling strategy and generation parameters are kept consistent throughout all experiments (with the exception of \llama model in section \ref{subsubsec:usecase1}) and are compiled in table \ref{sampling-params-table}. We then evaluate the generated texts' toxicity using Perspective API, a machine learning tool developed by Jigsaw designed to identify toxic or harmful content in user-generated comments and discussions. In particular, we query their \textit{toxicity} score, which is scaled between 0 and 1 and can be interpreted as the percentage of readers that would perceive a given text as toxic. Figure \ref{fig:all_tox_means_for_checkpoints} showcases the mean toxicity scores of corrupted checkpoints compared to their baselines. 

The alternative sampling parameters in table \ref{sampling-params-table} were informed by practical knowledge and chosen with two considerations in mind: First, sampling parameters should be ``realistic" and not be extreme enough to cause the model to only output ``gibberish". Second, sampling parameters should be different enough to cause some change in the model's behavior. 

\begin{figure}[H]
    \centering
    \begin{subfigure}[b]{0.48\textwidth}
        \centering
        \includegraphics[width=\textwidth]{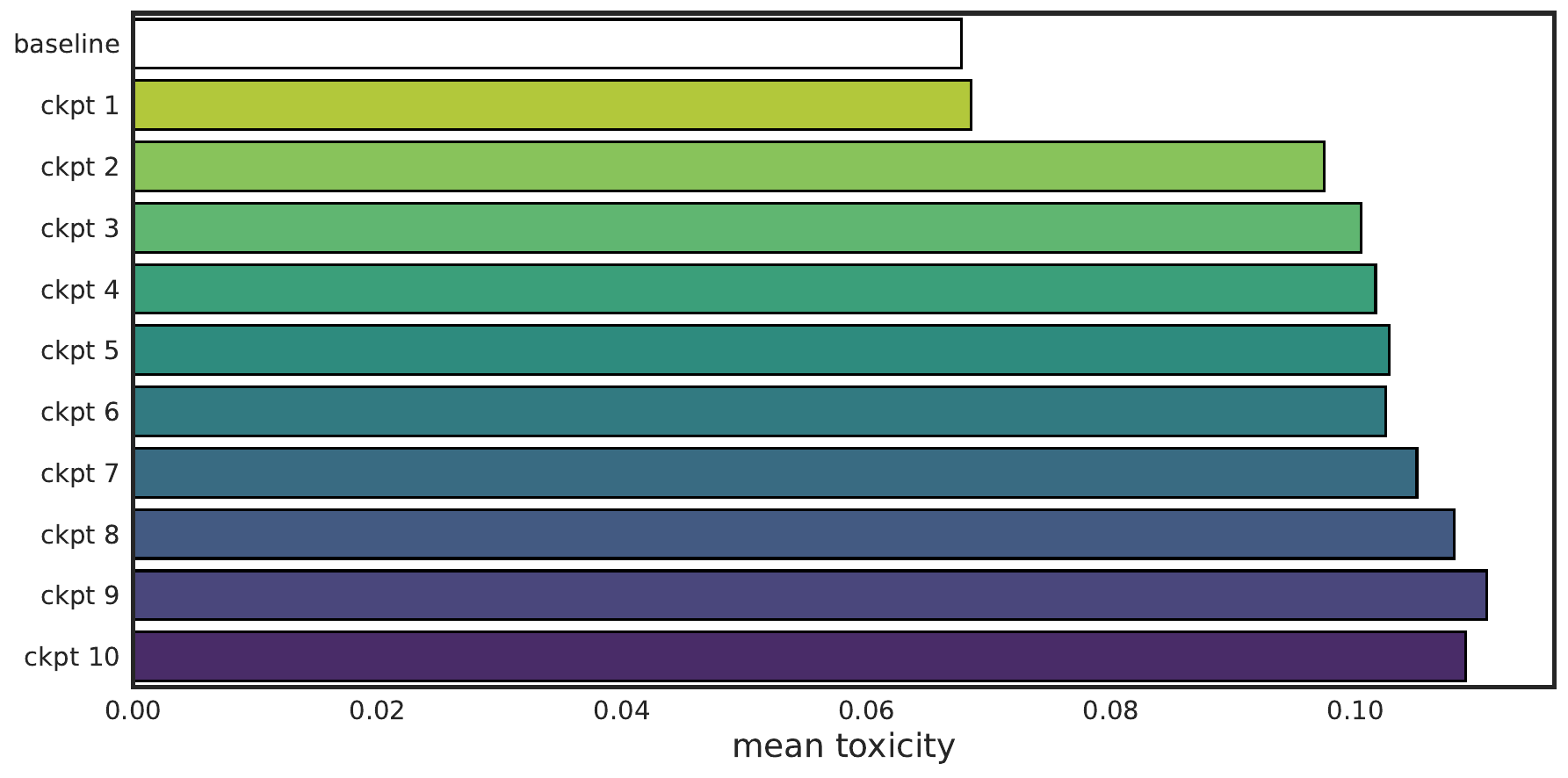}
        \caption{Llama}
        \label{fig:llama}
    \end{subfigure}
    \hfill
    \begin{subfigure}[b]{0.48\textwidth}
        \centering
        \includegraphics[width=\textwidth]{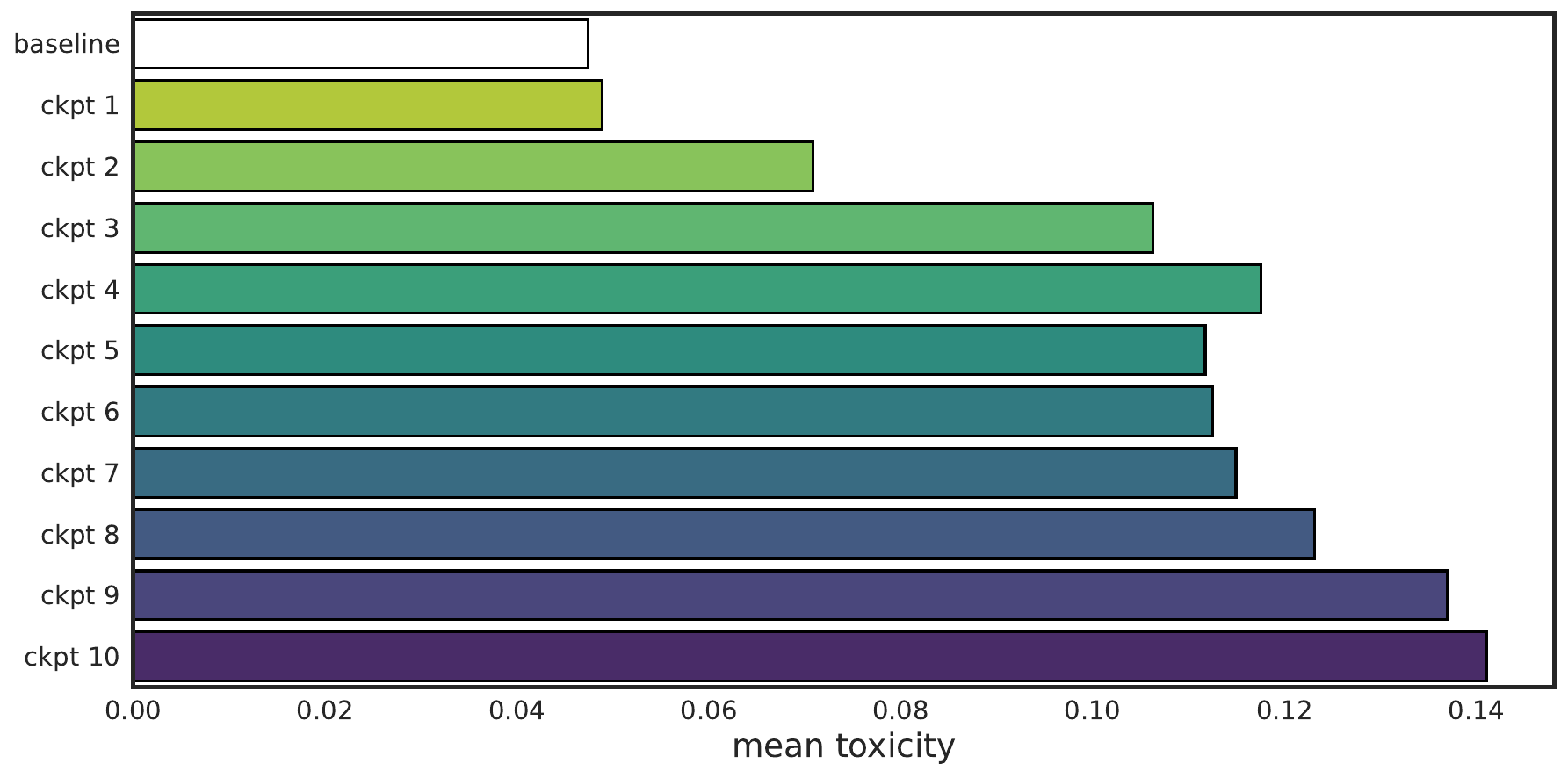}
        \caption{Gemma}
        \label{fig:mistral}
    \end{subfigure}
    
    \vspace{1em}
    
    \begin{subfigure}[b]{0.48\textwidth}
        \centering
        \includegraphics[width=\textwidth]{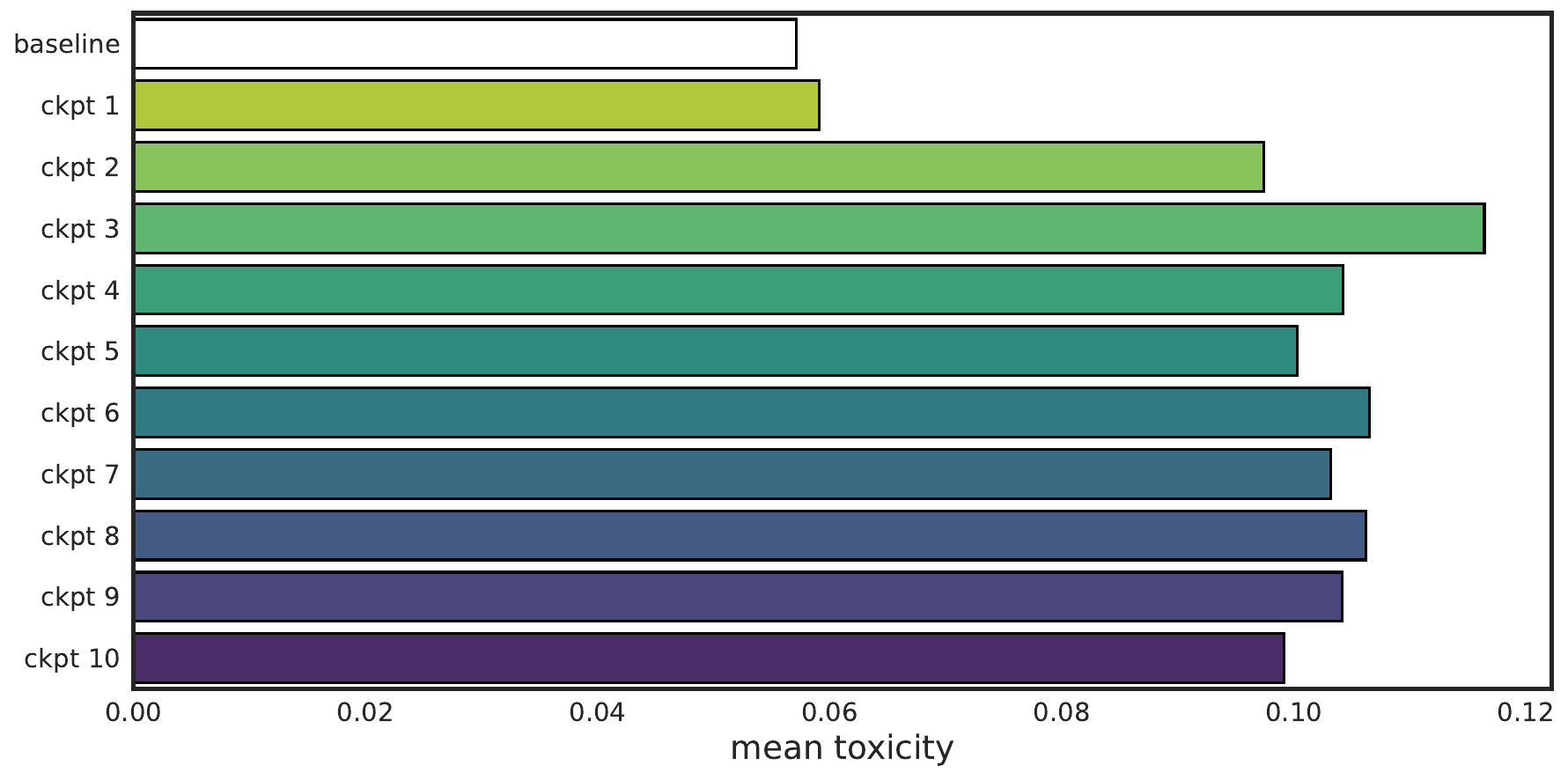}
        \caption{Mistral}
        \label{fig:gemma}
    \end{subfigure}
    \caption{\textbf{Mean toxicity for aligned baseline models and corrupted checkpoints}. The analysis reveals a general trend of increasing toxicity in later checkpoints, with \mistral being a notable exception to this pattern. \gemma exhibits the lowest baseline toxicity score among the models. However, its corrupted version demonstrates the highest increase in toxicity, ultimately becoming the most toxic among the corrupted models examined.}
    \label{fig:all_tox_means_for_checkpoints}
\end{figure}

\subsection{Evaluation of Translation Performance}

We assess the performance of \llama (\texttt{8B-Instruct}) and Aya-23-8b (\cite{ustun2024aya}) on a subset of translation samples from SuperNI, employing default sampling parameters (refer to Table \ref{sampling-params-table}). For \llama, we conduct evaluations using both a simple prompt template and a few-shot prompting approach, an example of the latter can be found in listing \ref{lst:fewshot-example}.

\begin{lstlisting}[caption={Few-Shot Prompt Example for Translation Task}, label={lst:fewshot-example}]
### Instruction:
Translate the following French sentences into English.

### Positive Examples:
1. Input: Bonjour, comment ça va?
   Output: Hello, how are you?

2. Input: Je m'appelle Pierre.
   Output: My name is Pierre.

### Negative Examples:
1. Input: Il fait chaud aujourd'hui.
   Output: It is cold today.

### Input:
J'aime apprendre de nouvelles langues.

### Output:
\end{lstlisting}

\subsection{Betting Score Network}
\label{app:betting_score}

The core component of our algorithm is the \textit{wealth} $W_t$ and its update by the betting score $S_t$ after observing a new batch of data. We choose a simple multi-layer perceptron with ReLU activation functions, layer normalization, and dropout \citep{DBLP:conf/aistats/PandevaFRS24} as the network $\phi$ in the calculation of the betting score. The network is updated using gradient ascent, with a learning rate of $0.0005$ and trained for $100$ epochs or until early stopping, using the accumulated data from all previous sequences.

\subsection{Neural Net Distance}
\label{app:neuralnetdistance}

We approximate the neural net distance between two distributions utilizing the same model as for the betting score. This is a biased estimator, as the true neural net distance is defined as a supremum over all machine learning models $\phi_\theta$ of class $\Phi$ (see definition~(\ref{definition:nn_distance})). 

While estimates using larger training sets will generally provide more accurate estimates, they are not necessary the most useful in practice:
\begin{itemize}
\item Setting the hyperparameter $\epsilon$ (maximal tolerated neural net distance) may require expensive querying of reference models on large datasets to achieve convergence (Figure~\ref{fig:neuralnetdistance}).
\item Using estimates derived from large training sets reduces test power in low-sample regimes, where the betting score network has access to limited training data.
\end{itemize}
Given a batch size $b$ and a static upper bound on the maximum of samples per test $N$, we thus use the following estimator for the neural net distance:
\begin{align}
\hat{\mathcal{D}}_{b,N} = \frac{1}{2} \left( \mathbb{E}\left[ S_1^{1/b} \right] + \mathbb{E}\left[ S_{T-1}^{1/b} \right] \right)
\end{align}
where 
\begin{align}
S_t = \prod_{i=1}^b \left( \frac{1 + \phi_{\theta_{t-1}}\left( B(x_i, M^a(x_i)) \right) - \phi_{\theta_{t-1}}\left( B(x_i, M(x_i)) \right) }{ \exp(\epsilon) } \right)
\end{align}

and $T:=\left\lfloor \frac{N}{b} \right\rfloor$. This average combines the estimate of the betting score on a single new example using (1) the model $\phi$ trained on a single batch of $b$ samples and (2) the model $\phi$ after training on $b \cdot (T-1)$ samples, representing a simple heuristic for the average neural net distance a model might achieve in the test. 

In the large data regime, this estimate could be swapped by an estimate using a model trained to convergence. Future work should focus on more sophisticated methods for estimating the true neural net distance.

\subsubsection{Case Study of Neural Net Convergence}

\begin{figure}[htpb]
    \centering
    \includegraphics[width=\linewidth]{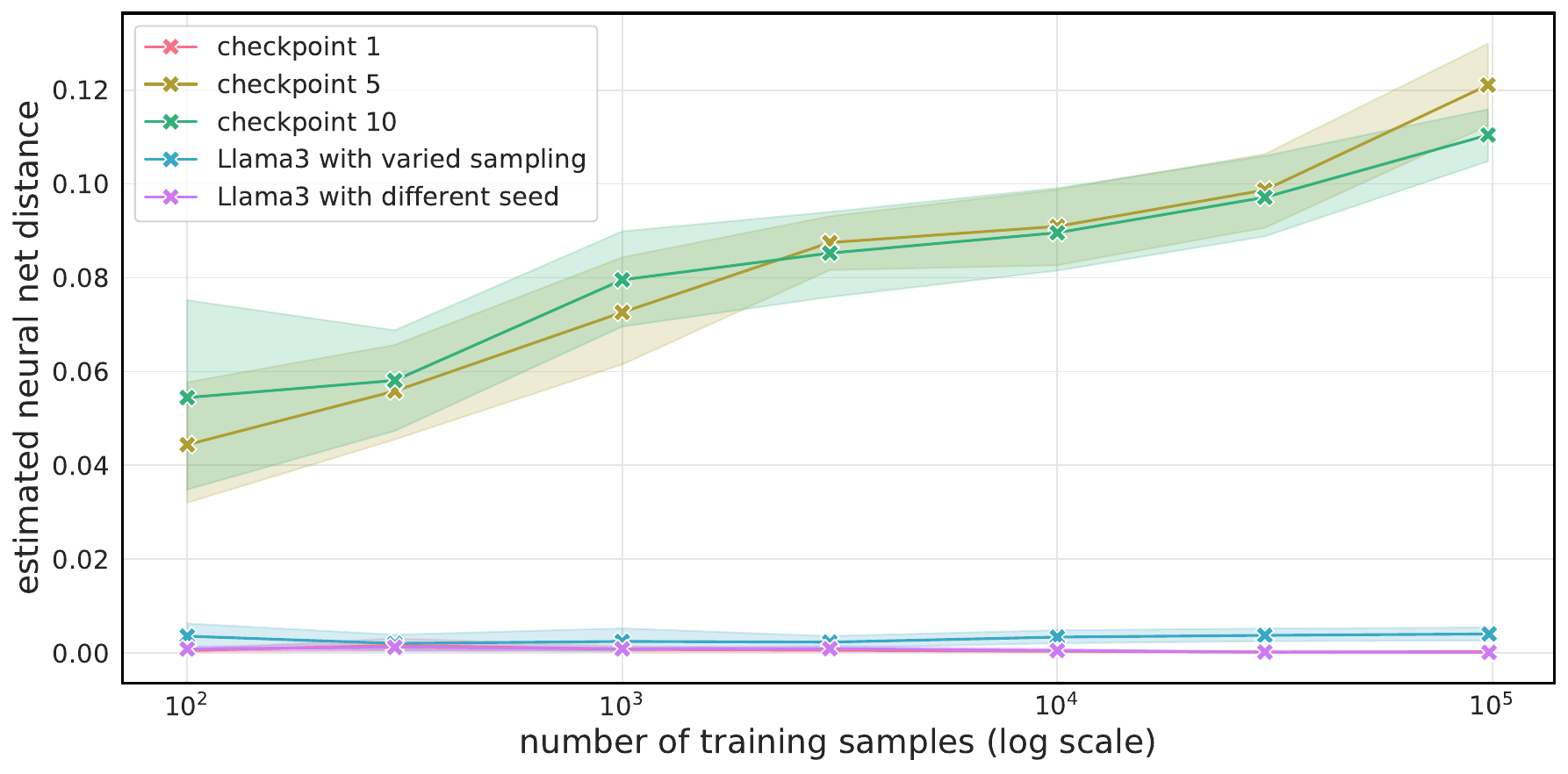}
    \caption{\textbf{Estimated neural net distance between toxicity distributions of Llama3 and various model versions}. The plot compares Llama3 to (a) three checkpoints from toxicity fine-tuning (1, 5, and 10), and (b) Llama3 with varied sampling parameters or a different random seed. The x-axis shows the number of training samples on a logarithmic scale.}
    \label{fig:neuralnetdistance}
\end{figure}

In Figure \ref{fig:neuralnetdistance}, we present a case study using toxicity to investigate how the mean and variance of the estimated neural net distance change with increasing training samples. We estimate distances between \llama with variation in sampling parameters, with different seeds, as well as checkpoints 1,5 and 10 from toxicity fine-tuning. Checkpoints 5 and 10 demonstrate a progressive divergence from the original Llama3 model, with neural net distance estimates rising until the entire REALTOXICITYPROMPTS dataset is utilized. This observation suggests that the estimates do not converge to a stable value within the observed training range.

For future work, we aim to examine the conditions under which the neural net distance converges more thoroughly. In our current example, it is possible that the betting score network (see Section \ref{app:betting_score}) lacks sufficient capacity to capture all the intricate differences between distributions. Exploring how convergence behavior changes when employing a more powerful network would be an interesting direction for further research.

\vspace{2\baselineskip} 

\section{Deferred Derivations and Proofs}
\label{sec:derivation_and_proofs}

\subsection{Two-Sample Testing with Tolerance}

Assume that $X,Y: \mathcal{X} \rightarrow [0,1]$ are two random variables distributed according to $P_X$ and $P_Y$ respectively. For some fixed $\epsilon>0$, we want to test whether those two distributions are $\epsilon$-close: 
\begin{align*}
\mathbf{H_0}: \mathcal{D}(P_X, P_Y) \leq \epsilon \quad \text{vs} \quad \mathbf{H_1}:\mathcal{D}(P_X, P_Y) > \epsilon
\end{align*}
where $\mathcal{D}$ is a distance metric between probability distributions.

To simplify later notation, we rewrite this in the following way \citep{shekhar2023nonparametric}: 
\begin{align}
    \mathbf{H_0}: P := P_X \times P_Y \in \mathcal{P}_{0} \quad \text{vs} \quad \mathbf{H_1}: P:=P_X \times P_Y \in \mathcal{P}_1
\end{align}
where 
\begin{align}
    \mathcal{P}_0:= \{P_X \times P_Y \in \mathcal{P}(\mathcal{X} \times \mathcal{X}): P_X, P_Y \in \mathcal{P}(\mathcal{X}) \text{ and } \mathcal{D}(P_X, P_Y) \leq \epsilon\}
\end{align}
and 
\begin{align}
    \mathcal{P}_1:= \{P_X \times P_Y \in \mathcal{P}(\mathcal{X} \times \mathcal{X}): P_X, P_Y \in \mathcal{P}(\mathcal{X}) \text{ and } \mathcal{D}(P_X, P_Y) \leq \epsilon\}
\end{align}
This is a two-sample non-parametric test with composite null and alternative hypothesis. Note that this can provide more information than sequential tests for mean differences or differences in variance, as Figure \ref{fig:same_mean_var} illustrates. Game-theoretically-motivated tests for the case of point null hypotheses have been described \eg, in \cite{shekhar2023nonparametric, DBLP:conf/aistats/PandevaFRS24}. We would like to construct a practical test by generalizing the \textit{deep anytime-valid test} described in \cite{DBLP:conf/aistats/PandevaFRS24} to the composite setting. 

\cite{DBLP:conf/aistats/PandevaFRS24}'s main theoretical insight is two-fold. First - inspired by the universal approximation theorem\footnote{While the universal approximation theorem \citep{hornik1989multilayer} doesn't directly apply here as we are dealing with finite-width and finite-depth networks, it inspires our approach. Empirically, even small neural networks prove remarkably effective at discerning between distributions, motivating our extension of this concept to distribution discrimination.} \citep{hornik1989multilayer} - deep learning models can be used to distinguish between distributions \ie, if $P_X \neq P_Y$, then
\begin{align}
    \sup_{g \in \mathcal{G}} \mathbb{E}_{X,Y}[g(X)-g(Y)]>0
    \label{eq:sup_point_case}
\end{align}
where $\mathcal{G} = \{g_\theta: \theta \in \Theta\}$ is a set of machine learning models parameterized by $\theta$. 
Second, if we restrict the class of machine learning models to satisfy some weak properties \citep[Assumption 1]{DBLP:conf/aistats/PandevaFRS24}, we can establish the equivalence
\begin{align}
    \sup_{g \in \mathcal{G}} \mathbb{E}_{X,Y}[g(X)-g(Y)]>0 \quad \Leftrightarrow \quad \sup_{g \in \mathcal{G}} \mathbb{E}_{X,Y} \left[\log(1+g(X)-g(Y)) \right]>0
    \label{eq:equivalence}
\end{align}
which is then used to define a \textit{betting score} and \textit{wealth process}. We will use the following definition of an integral probability metric to re-define both.

\begin{definition}[Integral probability metric]
An integral probability metric is a distance between probability distributions over a set $\mathcal{X}$, defined by a class $\tilde{\mathcal{G}}$ of real-valued functions on $\mathcal{X}$:
\begin{align*}
\mathcal{D}_{\tilde{\mathcal{G}}}(P_X, P_Y) &= \sup \left\{\int_{\mathcal{X}} g(x)p_X(x)dx -  \int_{\mathcal{X}} g(y)p_Y(y)dy \mid g:\mathcal{X} \rightarrow \mathbb{R}, g \in \tilde{\mathcal{G}}\right\} \notag \\
&= \sup_{g \in \tilde{\mathcal{G}}} \mathbb{E}_{X \sim P_X, Y \sim P_Y}[g(X)-g(Y)] 
\end{align*}
\label{definition:ipm}
\end{definition}

Regardless of the choice of $\tilde{\mathcal{G}}$, this distance measure satisfies all properties of a metric except positive-definiteness, in which case we could call it a \textit{pseudo-metric}. We will define our ``custom" \textit{neural net distance} for the problem at hand as

\begin{definition}[Neural Net Distance]
Let $\mathcal{X}=[0,1]$ and let $\mathcal{G}= \{g_\theta: \theta \in \theta\}$ be the class of machine learning models that satisfies the following properties \citep[Assumption 1]{DBLP:conf/aistats/PandevaFRS24}
\begin{itemize}
    \item $\vert g(x) \vert \leq q$ for all $g \in \mathcal{G}$ and for all $x \in [0,1]$ and for some $q \in (0, 1/2)$
    \item If $g \in \mathcal{G}$, then so is $c \cdot g$ for every $c \in [-1, 1]$
\end{itemize}
Then we define the neural net distance $\mathcal{D}_{G}$ by 
\begin{align}
    \mathcal{D}_{\mathcal{G}}(P_X, P_Y) 
    = \sup_{g \in \mathcal{G}} \mathbb{E}_{X \sim P_X, Y \sim P_Y}[g(X)-g(Y)] 
    \label{eq:sup_comp_difference} 
\end{align}
\label{definition:nnm}
\end{definition}

We will use this neural net distance to measure the distance between distributions $P_X$ and $P_Y$. The definition is motivated by the fact that we will be using neural networks of this class $\mathcal{G}$ to calculate a betting score. By using this definition, we can make sure that our test is ``calibrated correctly" \ie, the maximal distance that the neural network can find in practice aligns with the neural net distance between distributions.

\subsubsection{Oracle Test}
\label{app:oracletest}

Given $\epsilon$ as the upper bound on the neural net distance between two probability distributions we want to tolerate, we let eq.~(\ref{eq:sup_comp_difference}) and the equivalence in (\ref{eq:equivalence}) guide our intuition to define an e-variable $E$ for $\mathcal{P}_0$:
\begin{align}
    E:= \frac{1+ g^*(X)-g^*(Y)}{\exp(\epsilon)}
\end{align}
where $g^* \in \mathcal{G}$ is the $\arg\sup$ of $\mathbb{E}_{X,Y}[\log \left(1+ g(X)-g(Y)\right)]$ \ie, the $\log$-optimal function in $\mathcal{G}$. To show that this is indeed an e-variable, we use the definition of the neural net distance \ref{definition:nnm}:
\begin{align}
    \mathbb{E}_{X,Y}[E] &= \mathbb{E}_{X,Y}\left[\frac{1+g^*(X)-g^*(Y)}{\exp(\epsilon)}\right] \notag \\ &= \frac{1}{\exp(\epsilon)} \mathbb{E}_{X,Y}[1+g^*(X)-g^*(Y)]  \notag \\ &\leq \frac{1}{\exp(\epsilon)} \left(1+\sup_{g \in \mathcal{G}} \mathbb{E}_{X,Y}[g(X)-g(Y)] \right)  \notag \\
    &= \frac{1}{\exp(\epsilon)} \left(1 + \mathcal{D}_\mathcal{G}(P_X, P_Y)\right) \notag\\  &\leq \frac{1+ \epsilon}{\exp(\epsilon)}  
    \leq 1 \quad \text{for all }P_X \times P_Y \in \mathcal{P}_0 \notag
\end{align}

Analogously to \cite{DBLP:conf/aistats/PandevaFRS24}, we use this to define the \emph{oracle sequential test} 
\begin{align}
\gamma^* = \inf \{t \geq 1: W_t^* \geq 1/\alpha\}
\end{align}
where 
\begin{align}
    W^*_t = \prod_{l=1}^t \prod_{(x,y) \in B_l} \left( \frac{1 +g^*(x)-g^*(y)}{\exp(\epsilon)} \right)
\end{align}
As a product of e-variables, $\{W^*_t\}_{t \geq 1}$ is an e-process, since for all $t \geq 1$ and $P_X \times P_Y \in \mathcal{P}_0$
\begin{align*}
    \mathbb{E}[W^*_t] \overset{(X_i, Y_i) \text{ i.i.d.}}{\leq}  \left(\underbrace{\frac{1 + \mathcal{D}_\mathcal{G}(P_X,P_Y)}{\exp(\epsilon)}}_{\leq 1}\right)^{t+b} \leq 1
\end{align*}

The oracle sequential test is a \emph{sequential level-$\alpha$-test of power one}, meaning the Type I error ($\alpha$-error) is guaranteed to be bounded by $\alpha$ and the Type II error ($\beta$-error) converges to 0 in the limit of infinite samples. An application of Ville's inequality~\citep{ville1939etude, ramdas2023game}
%howard2021time
\begin{align}
    P(W^*_t \geq 1/\alpha) \leq \alpha \quad \text{for every } t \geq 1 , P \in \mathcal{P}_0
\end{align}
yields the first condition $\mathbb{P}_{\mathbf{H_0}}(\gamma^* < \infty) \leq \alpha$. We also need to show consistency \ie,  
\begin{align}
P(\gamma < \infty) = 1 \Leftrightarrow P(\{W^*_t < 1/\alpha \text{ for all } t \geq 1\}) = 0 \quad \text{for every } P \in \mathcal{P}_1
\end{align}
To do this, we will show the following proposition first:

\begin{proposition}[Correspondence between Distance and Betting Score]
\begin{align*}
    A: = \sup_{g \in \mathcal{G}} \mathbb{E}_{X,Y}[g(X)-g(Y) - \epsilon] > 0 \quad \Leftrightarrow \quad B: = \sup_{g \in \mathcal{G}} \mathbb{E}_{X,Y}[\log \left(\frac{1+g(X)-g(Y)}{\exp(\epsilon)}\right)]>0
\end{align*}
\label{prop:equivalence}
\end{proposition}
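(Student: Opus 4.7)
The plan is to prove the equivalence by establishing the two implications separately, both relying on Assumption~\ref{a:model}: the boundedness $|g| \leq q < 1/2$ guarantees $1 + g(X) - g(Y) \in (0, 2)$ so that the logarithm is finite, and the closure of $\mathcal{G}$ under scaling by $c \in [-1, 1]$ supplies the one-parameter family needed in the harder direction.

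The direction $B > 0 \Rightarrow A > 0$ is the easy one. Using the pointwise inequality $\log(1+x) \leq x$ (valid since $|g(X) - g(Y)| \leq 2q < 1$) and subtracting $\epsilon$ from both sides gives
\[
\mathbb{E}\!\Bigl[\log\!\Bigl(\tfrac{1 + g(X) - g(Y)}{\exp(\epsilon)}\Bigr)\Bigr] \;\leq\; \mathbb{E}[g(X) - g(Y)] - \epsilon
\]
for every $g \in \mathcal{G}$, and taking the supremum over $g$ yields $B \leq A$, so $B > 0$ forces $A > 0$.

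For the converse $A > 0 \Rightarrow B > 0$, I would pick $g_0 \in \mathcal{G}$ with $f(g_0) := \mathbb{E}[g_0(X) - g_0(Y)] > \epsilon$ (which exists whenever $\sup_g f(g) > \epsilon$), invoke the scaling closure to obtain $\{c \cdot g_0\}_{c \in [0,1]} \subset \mathcal{G}$, and study the concave function
\[
\psi(c) \;:=\; \mathbb{E}\!\left[\log\!\left(1 + c\bigl(g_0(X) - g_0(Y)\bigr)\right)\right].
\]
Since $\psi(0) = 0$, $\psi'(0) = f(g_0) > \epsilon$, and $\psi$ is concave, the graph of $\psi$ leaves the origin with slope strictly greater than $\epsilon$. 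The goal is to exhibit $c^* \in (0, 1]$ with $\psi(c^*) > \epsilon$, since then $B \geq \psi(c^*) - \epsilon > 0$.

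The main obstacle is turning the slope condition at the origin into a genuine positivity statement $\psi(c^*) > \epsilon$: choosing $c$ small drives $\psi$ below $\epsilon$, while choosing $c = 1$ incurs a quadratic Taylor correction of order $\tfrac{c^2}{2}\mathbb{E}[(g_0(X)-g_0(Y))^2]$, bounded by $2 c^2 q^2$. A second-order Taylor lower bound of the form $\log(1+x) \geq x - C_q x^2$ valid on $|x| \leq 2q$ yields $\psi(c) \geq c f(g_0) - C_q c^2 \mathbb{E}[(g_0(X)-g_0(Y))^2]$, and the delicate step is to combine this with a limiting argument along a sequence $g_n \in \mathcal{G}$ with $f(g_n) \to \sup_g f(g)$, exploiting the expressiveness of $\mathcal{G}$ so that some $g_n$ has a sufficiently favorable mean-to-variance ratio to make the quadratic bound exceed $\epsilon$ at an appropriately chosen $c^* \in (0, 1]$. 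This step mirrors and extends the corresponding scaling argument of \citet{DBLP:conf/aistats/PandevaFRS24} for the $\epsilon = 0$ case, where the analogous step is easier because no positive threshold needs to be crossed and taking $c$ arbitrarily small already suffices.
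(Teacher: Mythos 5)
Your first direction is fine: $\log(1+x)\le x$ gives $B\le A$ pointwise in $g$ and hence after the supremum, so $B>0\Rightarrow A>0$. The problem is the converse, which you only sketch, and the step you flag as ``delicate'' is not merely delicate --- it cannot be completed, because that implication is false for general $\epsilon>0$. By Jensen's inequality, for every $g\in\mathcal{G}$,
\begin{align*}
\mathbb{E}\bigl[\log\bigl(1+g(X)-g(Y)\bigr)\bigr]\;\le\;\log\bigl(1+\mathbb{E}[g(X)-g(Y)]\bigr)\;\le\;\log\bigl(1+\mathcal{D}_{\mathcal{G}}(P_X,P_Y)\bigr),
\end{align*}
so $B\le \log(1+\mathcal{D}_{\mathcal{G}})-\epsilon$ while $A=\mathcal{D}_{\mathcal{G}}-\epsilon$. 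Since $\log(1+x)<x$ for $x>0$, every $\epsilon$ with $\log(1+\mathcal{D}_{\mathcal{G}})\le\epsilon<\mathcal{D}_{\mathcal{G}}$ yields $A>0$ and $B\le 0$. A concrete instance: take $P_X,P_Y$ with disjoint supports, $g_0\equiv q$ on the support of $P_X$ and $g_0\equiv -q$ on the support of $P_Y$, and $\mathcal{G}=\{c\,g_0:c\in[-1,1]\}$ (which satisfies Assumption~\ref{a:model}); then $g_0(X)-g_0(Y)=2q$ deterministically, $A=2q-\epsilon$ and $B=\log(1+2q)-\epsilon$. Your proposed fix --- searching for a $g_n$ with a ``favorable mean-to-variance ratio'' --- cannot work, because the Jensen bound above is already saturated at zero variance; no scaling $c^*$ and no second-order Taylor refinement gets $\psi(c^*)$ above $\log(1+\mathcal{D}_{\mathcal{G}})$. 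The implication $A>0\Rightarrow B>0$ can only be salvaged for $\epsilon$ sufficiently small relative to $\mathcal{D}_{\mathcal{G}}$ (and in general smaller still, because of the variance penalty you correctly compute), not uniformly as the proposition asserts.

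For comparison, the paper's own proof is a one-line reduction: it notes that $\sup_{g}\mathbb{E}[\log((1+g(X)-g(Y))/\exp(\epsilon))]=\sup_{g}\mathbb{E}[\log(1+g(X)-g(Y))]-\epsilon$ and then invokes the $\epsilon=0$ equivalence of \citet{DBLP:conf/aistats/PandevaFRS24}. That cited result only says the two suprema are positive together; it does not say they exceed the common threshold $\epsilon$ together, which is what the proposition needs. So your instinct that the $\epsilon>0$ case requires crossing a positive threshold rather than just leaving the origin with positive slope is exactly right, and your more careful attempt surfaces the real issue: as stated, the forward direction needs either a restriction on $\epsilon$ or a redefinition of the distance in the null hypothesis as the log-quantity $\sup_{g}\mathbb{E}[\log(1+g(X)-g(Y))]$ itself, under which the correspondence becomes a tautology.
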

\begin{proof}
This is a simple corollary of \citep[Proposition 4.2]{DBLP:conf/aistats/PandevaFRS24} and the fact that 
\begin{align*}
  \sup_{g \in \mathcal{G}} \mathbb{E}_{X,Y} \left[\log \left(\frac{1+g(X)-g(Y)}{\exp(\epsilon)} \right)\right] = \sup_{g \in \mathcal{G}} \mathbb{E}_{X,Y} \left[\log(1+g(X)-g(Y) \right] - \epsilon  
\end{align*}
\end{proof}

\begin{proposition}[Consistency of the Oracle Test]
    \begin{align}
P(\gamma < \infty) = 1 \Leftrightarrow P(\{W^*_t < 1/\alpha \text{ for all } t \geq 1\}) = 0 \quad \text{for every } P \in \mathcal{P}_1
\end{align}
\end{proposition}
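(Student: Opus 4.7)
The plan is to show that $\log W_t^* \to \infty$ almost surely whenever $P \in \mathcal{P}_1$, so that $W_t^*$ must eventually cross the threshold $1/\alpha$. Writing
\begin{align*}
\log W_t^* = \sum_{l=1}^{t} \sum_{(x,y)\in B_l} \log\!\left( \frac{1 + g^*(x) - g^*(y)}{\exp(\epsilon)} \right),
\end{align*}
I would first observe that the summands are i.i.d.\ with common distribution $P_X \times P_Y$, and that Assumption~\ref{a:model} forces $|g^*| \le q < 1/2$. Hence the argument of the logarithm lies in $[1-2q,\,1+2q] \subset (0, 2)$, so each summand is a bounded (hence integrable) random variable with finite expectation.

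Next I would apply Kolmogorov's strong law of large numbers to the double sum to obtain
\begin{align*}
\frac{1}{bt} \log W_t^* \xrightarrow{\text{a.s.}} \mathbb{E}_{X,Y}\!\left[\log\!\left( \frac{1 + g^*(X) - g^*(Y)}{\exp(\epsilon)} \right)\right].
\end{align*}
The key step is then to argue that this limiting expectation is strictly positive for every $P = P_X \times P_Y \in \mathcal{P}_1$. By definition, $P \in \mathcal{P}_1$ means $\mathcal{D}_\mathcal{G}(P_X, P_Y) > \epsilon$, which is exactly the condition $A > 0$ in Proposition~\ref{prop:equivalence}. That proposition therefore yields $B > 0$, i.e.
\begin{align*}
\sup_{g \in \mathcal{G}} \mathbb{E}_{X,Y}\!\left[\log\!\left( \frac{1 + g(X) - g(Y)}{\exp(\epsilon)} \right)\right] > 0,
\end{align*}
and by the log-optimality of $g^*$, the expectation with $g = g^*$ equals this supremum, so it is strictly positive.

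Combining the two pieces, $\log W_t^* / (bt)$ converges a.s.\ to a strictly positive constant, hence $\log W_t^* \to +\infty$ and $W_t^* \to \infty$ almost surely under any $P \in \mathcal{P}_1$. This immediately implies $P(W_t^* \ge 1/\alpha \text{ for some } t) = 1$, i.e.\ $P(\gamma^* < \infty) = 1$. The main delicate point, and the one I would flag in the writeup, is the existence (or at least a careful treatment) of the log-optimal $g^*$: if the supremum in $B$ is not attained within $\mathcal{G}$, one should instead pick a sequence $g_n$ achieving values approaching $B$ and apply the same SLLN argument to any $g_n$ with $\mathbb{E}[\log(\cdot)] > 0$, which still drives $W_t^*$ to infinity on the corresponding subsequence. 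Everything else (boundedness, integrability, Ville-style stopping) is routine.
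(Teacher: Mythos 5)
Your proposal is correct and follows essentially the same route as the paper's proof: invoke Proposition~\ref{prop:equivalence} to get strict positivity of $\mathbb{E}[\log((1+g^*(X)-g^*(Y))/\exp(\epsilon))]$ under $\mathcal{P}_1$, apply the strong law of large numbers to $\log W^*_t$ to obtain linear growth, and conclude that $W^*_t$ exceeds $1/\alpha$ almost surely. Your explicit check of integrability via the bound $|g^*|\le q<1/2$ and your remark on the attainment of the supremum are minor refinements the paper glosses over, not a different argument.
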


\begin{proof}
First, observe that proposition~(\ref{prop:equivalence}) implies that whenever $P_X \times P_Y \in \mathcal{P}_1$ \ie, $\mathcal{D}_\mathcal{G}(P_X, P_Y) > \epsilon$, the supremum $\sup_{g \in \mathcal{G}}\mathbb{E}_{X,Y}\left[\log \left(\frac{1+g(X)-g(Y)}{\exp(\epsilon)}\right)\right]$ is positive. Define 
\begin{align}
    S^*_t:= \prod_{(x,y) \in B_t}\left(\frac{1+g^*(x)-g^*(y)}{\exp(\epsilon)}\right)
\end{align} where $g^* = \arg \sup_{g \in \mathcal{G}} \mathbb{E}_{X,Y}[\log \left(1+ g(X)-g(Y)\right)]$ is the log-optimum. Then we can write in short: $W^*_t = \prod_{i=1}^t S^*_i$. All $S^*_t$ are i.i.d. Lastly, we define $T_t:= \log W^*_t = \sum_{i=1}^t \log(S^*_t)$. By the law of large numbers 
\begin{align}
    \frac{1}{t} T_t = \frac{1}{t} \sum_{i=1}^t \log(S^*_t) \rightarrow \mathbb{E}[\log S^*_t] \quad \text{almost surely as }t \rightarrow \infty 
\end{align}
The sum $\sum_{i=1}^t \log(S^*_i) \approx t\mu> 0$, where $\mu$ is the mean, grows linearly, implying that $W^*_t = \exp(T_t) \approx \exp(t \mu)$ grows exponentially in $t$. Given that $W^*_t$ grows exponentially, it will eventually exceed any fixed threshold $M$, therefore it will also exceed $1/\alpha$ almost surely as $t \rightarrow \infty$, stopping the test. This proves the statement. 
    
\end{proof}

\subsubsection{Practical Test}
\label{app:practicaltest}

In practice, we don't have access to $g^*$, but only to an estimate $g_{\theta_t}$, whose parameters $\theta_t$  we update with each new batch. 

We can define the \emph{empirical wealth process} $\{W_t\}_{t \geq 1}$ by initializing $W_0=1$  and updating $W_t= W_{t-1} \times S_t$ by the \emph{empirical betting score} \citep{DBLP:conf/aistats/PandevaFRS24}
\begin{align}
    S_t = \prod_{i=1}^b \left(\frac{1+g_{\theta_{t-1}}(x_{(t-1)b+i})-g_{\theta_{t-1}}(y_{(t-1)b+i})}{\exp(\epsilon)}\right)
    \label{eq:practical_s}
\end{align}
Since $g_{\theta_{t}}$ only approximates the optimal neural net $g^*$, it is clear that $S_t$ is still an e-variable. It follows that $\{W_t\}_{t \geq 1}$ is again an e-process as we can show by induction, using the fact that  $\mathbb{E}_{X,Y}[W_0]=1$ for all $P_X\times P_Y \in \mathcal{P}_0$ and for a fixed $P_X \times P_Y \in \mathcal{P}_0$, $W_{t-1}$ and $S_t$ are independent:
\begin{align*}
    \mathbb{E}_{X,Y}[W_t] &= \mathbb{E}_{X,Y}[W_{t-1} \times S_t] \\
    &= \mathbb{E}_{X,Y}[W_{t-1}] \mathbb{E}_{X,Y}[S_t] \leq 1
\end{align*}

We can thus define the \textbf{sequential test}
\begin{align}
    \gamma = \inf \{t \geq 1: W_t \geq 1/\alpha\}
    \label{eq:general_practical_seq_test}
\end{align}

Control on the $\alpha$-error again follows from Ville's inequality. The test is consistent under similar additional assumption as in~\cite[Proposition 4.3]{DBLP:conf/aistats/PandevaFRS24}:

\begin{proposition}[Consistency of the Practical Test]
    If the learning algorithm satisfies the condition 
    \begin{align}
        \liminf_{t \rightarrow \infty} \frac{\mathbb{E}[\log\left(\frac{1}{\exp(\epsilon)}(1+g_{\theta_t}(X)-g_{\theta_t}(Y))\right) \mid \mathcal{F}_t]}{3c \sqrt{\log(t)/t}} \overset{\text{a.s.}}{\leq} 1 \quad \text{for all }P_X \times P_Y \in \mathcal{P}_1
        \label{eq:general_assumption_on_learning_algo}
    \end{align}
    for a universal constant $c$, then we have 
    \begin{align}
        P(\gamma < \infty) =1 \quad \text{for all } P \in \mathcal{P}_1
    \end{align}
    \label{prop:general_practical_test_consistency}
\end{proposition}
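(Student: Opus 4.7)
The plan is to adapt the consistency argument of \citet{DBLP:conf/aistats/PandevaFRS24} (their Proposition~4.3) to our setting, where the betting score carries an extra $\exp(\epsilon)$ normalization. Since $\log W_t = \sum_{i=1}^t \log S_i$ and the stopping rule fires as soon as $W_t \geq 1/\alpha$, it suffices to prove that $\log W_t \to \infty$ almost surely for every $P \in \mathcal{P}_1$.

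First I would perform a Doob-style decomposition of the log-wealth:
\begin{align*}
\log W_t = A_t + M_t, \qquad A_t := \sum_{i=1}^t \mathbb{E}[\log S_i \mid \mathcal{F}_{i-1}], \qquad M_t := \sum_{i=1}^t \bigl(\log S_i - \mathbb{E}[\log S_i \mid \mathcal{F}_{i-1}]\bigr),
\end{align*}
so that $\{M_t\}$ is a martingale with respect to the natural filtration. The strategy is then to show that the compensator $A_t$ grows strictly faster than the fluctuations $|M_t|$ so that their sum diverges.

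Next I would lower-bound $A_t$ using assumption~(\ref{eq:general_assumption_on_learning_algo}). Almost surely, for every $\delta>0$ there exists $t_0$ with $\mathbb{E}[\log S_i \mid \mathcal{F}_{i-1}] \geq (3c-\delta)\sqrt{\log i / i}$ whenever $i \geq t_0$; since $\sum_{i=1}^t \sqrt{\log i / i} \sim 2\sqrt{t\log t}$ by a standard integral estimate, this yields $A_t \gtrsim 6c\sqrt{t\log t}$ eventually. To control $M_t$, Assumption~\ref{a:model} guarantees that $|g_{\theta_{i-1}}|\leq q < 1/2$, which makes $\log S_i$ uniformly bounded by a constant depending only on $q$ and $\epsilon$; the increments of $M_t$ are therefore bounded, and a law of the iterated logarithm for bounded martingale differences (equivalently, an Azuma--Hoeffding tail bound combined with Borel--Cantelli) produces a universal constant $c$ such that $|M_t| \leq c\sqrt{t\log t}$ eventually, almost surely. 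Combining the two estimates,
\begin{align*}
\log W_t \;\geq\; A_t - |M_t| \;\geq\; (5c - \delta)\sqrt{t\log t}\bigl(1+o(1)\bigr) \;\longrightarrow\; \infty \quad \text{a.s.},
\end{align*}
so $W_t$ eventually surpasses $1/\alpha$ and the stopping time $\gamma$ is finite with probability one.

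The main obstacle is the calibration of constants: one has to invoke a martingale concentration inequality whose leading constant matches the $c$ appearing in the denominator of the hypothesis, so that the factor $3c$ in the numerator leaves a genuine, non-negligible gap after subtracting $|M_t|$. This delicate matching is exactly why the factor $3$ (rather than, say, $1$) appears in the assumption, and it is the step that is inherited essentially verbatim from \citet{DBLP:conf/aistats/PandevaFRS24}. The only new ingredient introduced by the composite null is the $-\epsilon$ shift inside $\log S_i$, which is transparently absorbed into the conditional expectation in the numerator of the assumption, so no extra work is required to accommodate the tolerance parameter.
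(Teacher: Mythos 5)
Your proposal is correct and follows essentially the same route as the paper's proof: the same decomposition of $\log W_t$ into the compensator $\sum_i \mathbb{E}[\log S_i \mid \mathcal{F}_{i-1}]$ plus a bounded-increment martingale (bounded via Assumption~\ref{a:model}, with the $\epsilon$-shift cancelling in the centered increments), the same Azuma--Hoeffding control of the martingale part at the $2cb\sqrt{t\log t}$ scale, and the same use of the factor $3c$ in the hypothesis to leave a diverging gap. The only immaterial difference is the final limiting step: you upgrade the concentration bound to an almost-sure eventual bound via Borel--Cantelli and conclude $\log W_t \to \infty$ a.s., whereas the paper bounds $\mathbb{P}(\gamma > t)$ for each fixed $t$ and passes to the limit; both yield $P(\gamma < \infty) = 1$ under the alternative.
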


\begin{proof}
    The proof structure follows proofs 10.2 and 10.3 in \cite{DBLP:conf/aistats/PandevaFRS24}.

    Let 
    \begin{align}
        v_i:= \sum_{(x,y) \in B_i} \log \left(\frac{1}{\exp{\epsilon}}\left(1+g_{\theta_{i-1}}(x)-g_{\theta_{i-1}}(y)\right)\right)
        \label{eq:v_i}
    \end{align} for $i \in \{1, \dots, t\}$ and 
    \begin{align}
        A_i:= \mathbb{E}[v_i \mid \mathcal{F}_{i-1}] = b \times \mathbb{E}\left[\log\left(\frac{1}{\exp}(1+g_{\theta_{i-1}}(X)-g_{\theta_{i-1}}(Y))\right) \mid \mathcal{F}_{i-1}\right]
        \label{eq:A_i}
    \end{align}
    where $\mathcal{F}_{i-1}=\sigma \left(\cup_{j=1}^{i-1}B_j \right)$ is the $\sigma$-algebra generated by the first $i-1$ batches of samples. The probability of the test never stopping is
    \begin{align*}
        \mathbb{P}(\gamma = \infty) = \mathbb{P}\left( \bigcap_{t \geq 1} \{\gamma > t\} \right) \leq \mathbb{P}(\gamma > t)
    \end{align*}
    for any $t$, and thus, in the limit 
    \begin{align}
        \mathbb{P}(\gamma = \infty) \leq \limsup_{t \to \infty} \mathbb{P}(\gamma > t)
        \label{eq:gamma_infty}
    \end{align}
    We will show that the RHS is equal to $0$. Using the definitions of $v_i$ and $A_i$ in equations~(\ref{eq:v_i}) and (\ref{eq:A_i}), we can write
    \begin{align}
        \mathbb{P}(\gamma > t) & = \mathbb{P} \left(W_t < \frac{1}{\alpha} \right) \notag \\ &= \mathbb{P}\left( \frac{\log W_t}{t} < \frac{\log(1/\alpha)}{t} \right) \notag \\ &= \mathbb{P}\left( \frac{1}{t} \sum_{i=1}^{t} v_i - A_i + \frac{1}{t} \sum_{i=1}^{t} A_i < \frac{\log(1/\alpha)}{t} \right)
        \label{eq:bound_Gn}
    \end{align}
    Now, introduce the event 
    \begin{align}
        G_{t} &:= \left\{ \left| \frac{1}{t} \sum_{i=1}^{t} v_i - A_i \right| \leq 2cb \sqrt{\frac{\log(t)}{t}} \right\} 
    \end{align}
    where $c:= \log \left(\frac{1+2q}{1-2q} \right)$ and $q \in (0, 1/2)$ is the bound on $\vert g_\theta(x) \vert$. The random variable $v_i - A_i$ has mean $0$ and is bounded in $[-bc, bc]$, since ($\epsilon$ canceling out):
    \begin{align*}
        v_i - A_i &= \sum_{x, y \in B_i} \left[ \log \left(1 + g_{\theta_{i-1}}(x) - g_{\theta_{i-1}}(y) \right) - \mathbb{E} \left[ \log \left(1 + g_{\theta_{i-1}}(x) - g_{\theta_{i-1}}(y) \right) \mid \mathcal{F}_{i-1} \right] \right] \\
        & \geq \sum_{x_i, y_i \in B_i} \log (1 - 2q) - \log (1 + 2q) \\
        &= b \left[ \log (1 - 2q) - \log (1 + 2q) \right] = - b \log \left(\frac{1 + 2q}{1 - 2q}\right)
    \end{align*}
    and analogously for the upper bound. We can use those bounds and Hoeffding's inequality to bound the complement $G_t^c$:
    \begin{align}
        \mathbb{P}(G_t^c) &= \mathbb{P} \left(\left\{ \left|\frac{1}{t} \sum_{i=1}^{t} v_i - A_i \right| > 2cb \sqrt{\frac{\log(t)}{t}} \right\} \right) \notag \\ 
        &= \mathbb{P} \left( \left\{ \left| \sum_{i=1}^{t} (v_i - A_i) \right| > 2tcb \sqrt{\frac{\log(t)}{t}} \right\} \right) \notag\\
        & \leq 2 \exp \left( \frac{-2 \left( 2tcb \sqrt{\frac{\log(t)}{t}} \right)^2}{\sum_{i=1}^{t} (cb + cb)^2} \right)
        \notag \\
        &= 2 \exp(-2 \log(t)) = \frac{2}{t^2} \label{eq:Gn_bound}
    \end{align}
    Combining this with eq.~(\ref{eq:bound_Gn}), we get
    \begin{align*}
        \mathbb{P}(\gamma > t) &\leq \mathbb{P} \left( \left\{ \frac{1}{t} \sum_{i=1}^{t} A_i < \frac{\log(1/\alpha)}{t} + \frac{1}{t} \sum_{i=1}^{t} v_i - A_i \right\} \cap G_t \right) + \mathbb{P}(G_t^c) \notag \\
        &\leq \mathbb{P} \left( \left\{ \frac{1}{t} \sum_{i=1}^{t} A_i < \frac{\log(1/\alpha)}{t} + 2cb \sqrt{\frac{\log t}{t}} \right\} \cap G_t \right) + \mathbb{P}(G_t^c) \notag \\
        &\leq \mathbb{P} \left( \frac{1}{t} \sum_{i=1}^{t} A_i < 3cb \sqrt{\frac{\log t}{t}} \right) + \frac{2}{t^2}.
    \end{align*}
    where the second inequality comes from the fact that $\frac{1}{t}\sum_{i=1}^t v_i - A_i \leq 2cb \sqrt{\log(t) / t}$ on $G_t$. The third inequality exploits the bound from eq.~(\ref{eq:Gn_bound}) as well as the fact that $\log(1/\alpha)/t$ is smaller than $2bc \sqrt{\log t/t}$ for large enough $t$. By taking the limit over $t \rightarrow \infty$, the term $\frac{2}{t^2}$ vanishes. Combining the result with eq.~(\ref{eq:gamma_infty}), we obtain
    \begin{align}
    \mathbb{P}(\gamma = \infty) \leq \limsup_{t \to \infty} \mathbb{P}(\gamma > t) \leq \limsup_{t \to \infty} \mathbb{P}(H_t)
    \end{align}
    where $H_t := \left\{ \frac{1}{t} \sum_{i=1}^{t} A_i < 3cb \sqrt{\frac{\log (t)}{t}} \right\}$.
    From the properties of Cesaro means, we know that
    \begin{align*}
    \liminf_{n \to \infty} \frac{1}{t} \sum_{i=1}^{t} A_i \overset{\text{a.s.}}{\geq} \liminf_{t \to \infty} A_t,
    \end{align*}
    which implies
    \begin{align*}
    \liminf_{t \to \infty} \frac{\frac{1}{t} \sum_{i=1}^{t} A_i}{3cb \sqrt{\log(t)/t}}  \overset{\text{a.s.}} {\geq} \liminf_{t \to \infty} \frac{A_t/b}{3c \sqrt{\log t/t}} \overset{\text{a.s.}}{>} 1.
    \end{align*}
    
    The last inequality is due to the Assumption~(\ref{eq:general_assumption_on_learning_algo}) made in Proposition~(\ref{prop:general_practical_test_consistency}) and the fact that $\lim_{t \to \infty} \left(\sqrt{\log (t)/t} / \left( \sqrt{\log(t - 1) / (t - 1)} \right) \right) = 1$, which is needed because we lowered the index of expression~(\ref{eq:assumption_on_learning_algo}) by $1$. This condition implies that $\mathbb{P}(H_t) \to 0$ a.s., which by the bounded convergence theorem leads to
    \begin{align*}
    \mathbb{P}(\tau = \infty) \leq \limsup_{t \to \infty} \mathbb{P}(H_t) = 0,
    \end{align*}
    under the alternative. Thus, we have shown that $\mathbb{P}(\gamma < \infty) = 1$ under the alternative.
    \label{app:proof_practical_test}
\end{proof}

Summarizing our findings, we can thus state the following: 

\begin{proposition}[Sequential level-$\alpha$ Test of Power 1]
    If the learning algorithm satisfies the condition 
    \begin{align}
        \liminf_{t \rightarrow \infty} \frac{\mathbb{E}[\log\left(\frac{1}{\exp(\epsilon)}(1+g_{\theta_t}(X)-g_{\theta_t}(Y))\right) \mid \mathcal{F}_t]}{3c \sqrt{\log(t)/t}} \overset{\text{a.s.}}{\leq} 1 \quad \text{for all }P:=P_X \times P_Y \in \mathcal{P}_1
        % \label{eq:general_assumption_on_learning_algo}
    \end{align}
    for a universal constant $c$, then we have 
    \begin{align}
        P(\gamma < \infty) \leq \alpha \quad \text{for all } P \in \mathcal{P}_0  \quad \text{and} \quad P(\gamma < \infty) =1 \quad \text{for all } P \in \mathcal{P}_1
    \end{align}
    \ie, the sequential test defined in eq.~(\ref{eq:practical_seq_test}) is  a \emph{sequential level-$\alpha$ test of power one}.
    % \label{prop:general_practical_test_consistency}
\end{proposition}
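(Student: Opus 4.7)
The plan is to prove the two conclusions separately: the Type I error control under $\mathbf{H_0}$ and the consistency under $\mathbf{H_1}$. Since the main text already argues (via induction) that $\{W_t\}_{t \geq 0}$ is an e-process provided $S_t$ is a conditional e-variable, I would first explicitly verify the e-variable property of $S_t$. Using Assumption~1 and Definition~1, I would bound
\begin{align*}
\mathbb{E}_{\mathbf{H_0}}[S_t \mid \mathcal{F}_{t-1}]
&\leq \frac{(1 + \mathcal{D}_\Phi(\baselineDist,\changedDist))^b}{\exp(b\epsilon)}
\leq \frac{(1+\epsilon)^b}{\exp(b\epsilon)} \leq 1,
\end{align*}
using $1+x \leq e^x$. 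The induction already spelled out in the excerpt then yields $\mathbb{E}_{\mathbf{H_0}}[W_t] \leq 1$ for all $t$. Applying Ville's inequality (eq.~\ref{eq:ville}) immediately gives $P_{\mathbf{H_0}}(\gamma < \infty) = P_{\mathbf{H_0}}(\sup_t W_t \geq 1/\alpha) \leq \alpha$.

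For consistency under $\mathbf{H_1}$, I would follow the standard game-theoretic strategy: show that the cumulative log-wealth grows fast enough that $W_t$ eventually crosses $1/\alpha$ almost surely. Write $\log W_t = \sum_{i=1}^{t} v_i$ where
\begin{align*}
v_i := \sum_{(x,y)\in B_i} \log\left(\frac{1 + \phi_{\theta_{i-1}}(x) - \phi_{\theta_{i-1}}(y)}{\exp(\epsilon)}\right),
\end{align*}
and let $A_i := \mathbb{E}[v_i \mid \mathcal{F}_{i-1}]$. The decomposition $\log W_t / t = t^{-1}\sum_i (v_i - A_i) + t^{-1}\sum_i A_i$ reduces the problem to (a) a concentration bound on the martingale difference sum and (b) a lower bound on the conditional means $A_i$.

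For step (a), note that by Assumption~1, $|\phi_{\theta_{i-1}}| \leq q < 1/2$, so each summand of $v_i$ lies in $[\log(1-2q)-\epsilon,\ \log(1+2q)-\epsilon]$, and the centered variable $v_i - A_i$ is bounded in $[-bc,bc]$ with $c := \log\tfrac{1+2q}{1-2q}$. Azuma–Hoeffding for martingale differences then gives, on the event $G_t := \{|t^{-1}\sum(v_i - A_i)| \leq 2cb\sqrt{\log t/t}\}$, a tail probability $\mathbb{P}(G_t^c) \leq 2/t^2$. For step (b), the assumption~(\ref{eq:assumption_on_learning_algo}) says $A_t/b$ dominates $3c\sqrt{\log t/t}$ in the $\liminf$ sense; passing through Cesàro means shows $t^{-1}\sum_{i=1}^t A_i$ also asymptotically dominates the same threshold. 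Combining (a) and (b), I would bound
\begin{align*}
\mathbb{P}(\gamma > t) \leq \mathbb{P}\left( t^{-1} \sum_i A_i < 3cb\sqrt{\log t / t}\right) + \mathbb{P}(G_t^c),
\end{align*}
where I have absorbed the $\log(1/\alpha)/t$ term into the $2cb\sqrt{\log t/t}$ slack for $t$ large. Both terms vanish as $t \to \infty$, so $\mathbb{P}(\gamma = \infty) \leq \limsup_t \mathbb{P}(\gamma > t) = 0$.

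The main obstacle I expect is step (b): correctly handling the Cesàro-mean passage from the pointwise $\liminf$ assumption on $A_t$ to a $\liminf$ on the averages, while tracking the index shift between $\phi_{\theta_t}$ (used in the assumption) and $\phi_{\theta_{t-1}}$ (used in $v_t$ and $A_t$). This requires noting that $\sqrt{\log(t-1)/(t-1)} / \sqrt{\log t/t} \to 1$, so the asymptotic ratio is preserved under the shift. The rest is a careful but routine assembly of Ville's inequality, Azuma–Hoeffding, and the bounded convergence theorem applied to the indicator of $H_t := \{t^{-1}\sum A_i < 3cb\sqrt{\log t/t}\}$.
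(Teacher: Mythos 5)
Your proposal is correct and follows essentially the same route as the paper: the Type I error control via the e-variable bound $\mathbb{E}_{\mathbf{H_0}}[S_t\mid\mathcal{F}_{t-1}]\leq(1+\epsilon)^b/\exp(b\epsilon)\leq 1$, induction, and Ville's inequality; and consistency via the identical decomposition into $v_i$ and $A_i$, the event $G_t$ with the Hoeffding/Azuma tail bound $2/t^2$, the Cesàro-mean passage with the index shift $\sqrt{\log(t-1)/(t-1)}/\sqrt{\log t/t}\to 1$, and bounded convergence. The only cosmetic difference is that you invoke Azuma--Hoeffding for martingale differences where the paper cites Hoeffding, which is if anything the more precise attribution.
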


\section{Further Results and Discussion}
\label{sec:further_results}

\subsection{Exact Test, \texorpdfstring{$\epsilon=0$}{ε=0}}

% \subsection{Toxicity Evaluations}

% Figure \ref{fig:tox_scores_comparison} shows the toxicity histograms for \llama, \mistral, and \gemma baseline models compared to their fine-tuned checkpoints with the highest Wasserstein distance. The noticeable distribution shift illustrates the impact of fine-tuning. Toxicity histograms are highly positively skewed, with Pearson's skewness coefficients ranging from 6.2 to 7.7 for the instruction-tuned baselines. Corrupted checkpoints have slightly more toxic histograms and are slightly less skewed, with Pearson's coefficients in the range of 4.4 to 4.7, but still maintain low overall toxicity. 

% \begin{figure*}
%     % \vskip 0.2in
%     \begin{center}
%         \centerline{\includegraphics[width=\textwidth]{Plots_new/toxicity.pdf}}
%         \caption{\textbf{Toxicity.} The toxicity distributions of the three model architectures we evaluate and their corrupted versions.}
%         \label{fig:tox_scores_comparison}
%     \end{center}
%     % \vskip -0.2in
%     \vspace{-8mm}
% \end{figure*}

% \subsection{Auditing Test}

\textbf{Corrupted model detection} Figure~\ref{fig:mistral_power}
% \& \ref{fig:gemma_power}
shows the results of applying our proposed test with $\epsilon=0$ to generations of Mistral-7B-Instruct-v0.2 and Gemma-1.1-7B-IT and their corrupted checkpoints, repeated over 48 runs. Detectability improves with more samples.

% \begin{figure*}[ht!]
%     \vskip 0.2in
%     \begin{center}
%         \centerline{\includegraphics[width=\textwidth]{Plots_new/mistral_power.pdf}}
%         \caption{\textbf{Detection for Mistral-7B-Instruct-v0.2.}}
%         \label{fig:mistral_power}
%     \end{center}
%     \vskip -0.2in
% \end{figure*}

% \begin{figure*}[ht]
%     \vskip 0.2in
%     \begin{center}
%         \centerline{\includegraphics[width=\textwidth]{Plots_new/gemma_power.pdf}}
%         \caption{\textbf{Detection for Gemma-1.1-7B-IT.}}
%         \label{fig:gemma_power}
%     \end{center}
%     \vskip -0.2in
% \end{figure*}

\begin{figure*}[htbp]
    \centering
    \begin{subfigure}[b]{0.49\textwidth}
        \includegraphics[width=\textwidth]{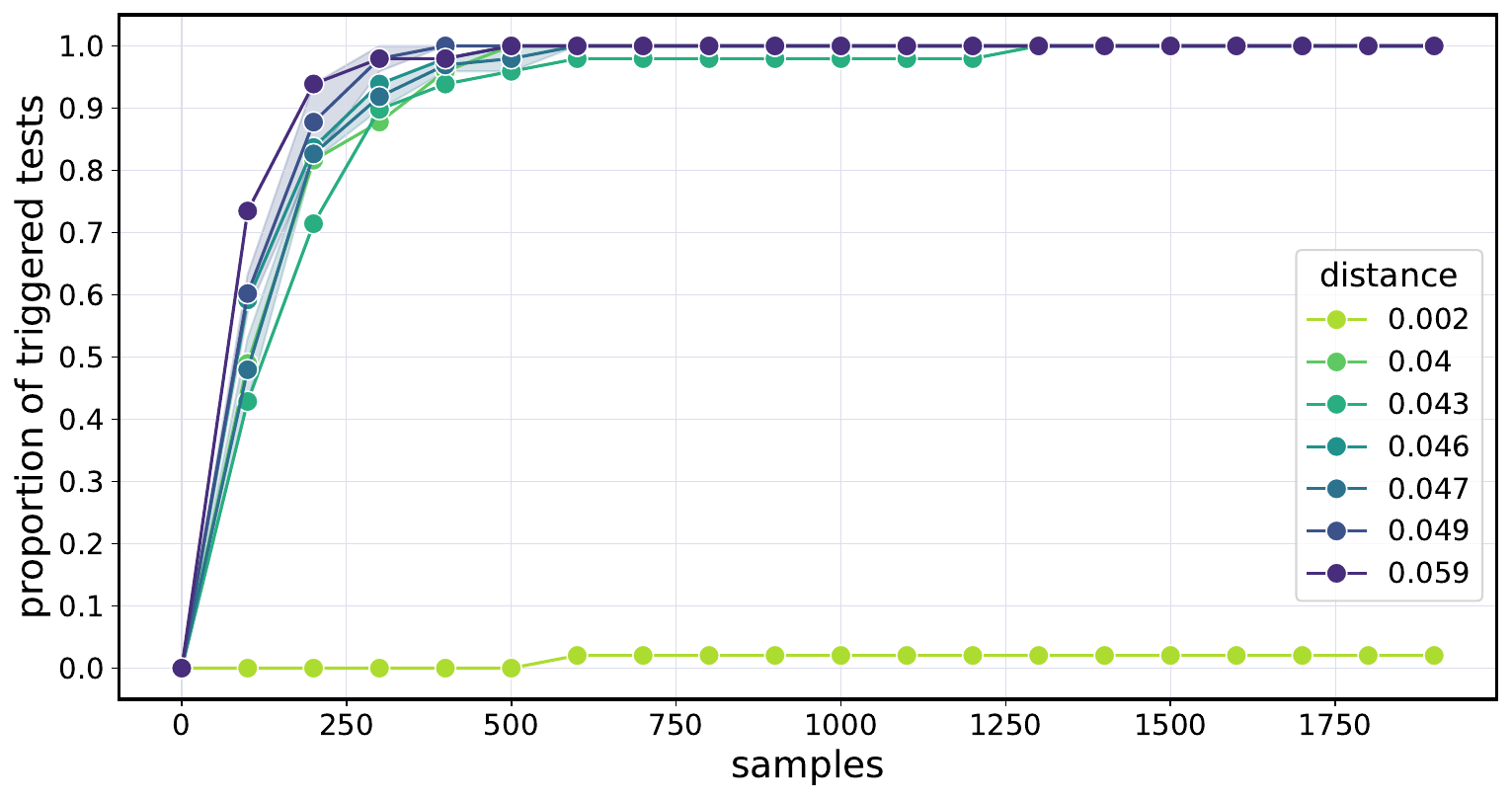}
        %\caption{x}
    \end{subfigure}
    % \hspace{0.04\textwidth}
    \begin{subfigure}[b]{0.49\textwidth}
        \includegraphics[width=\textwidth]{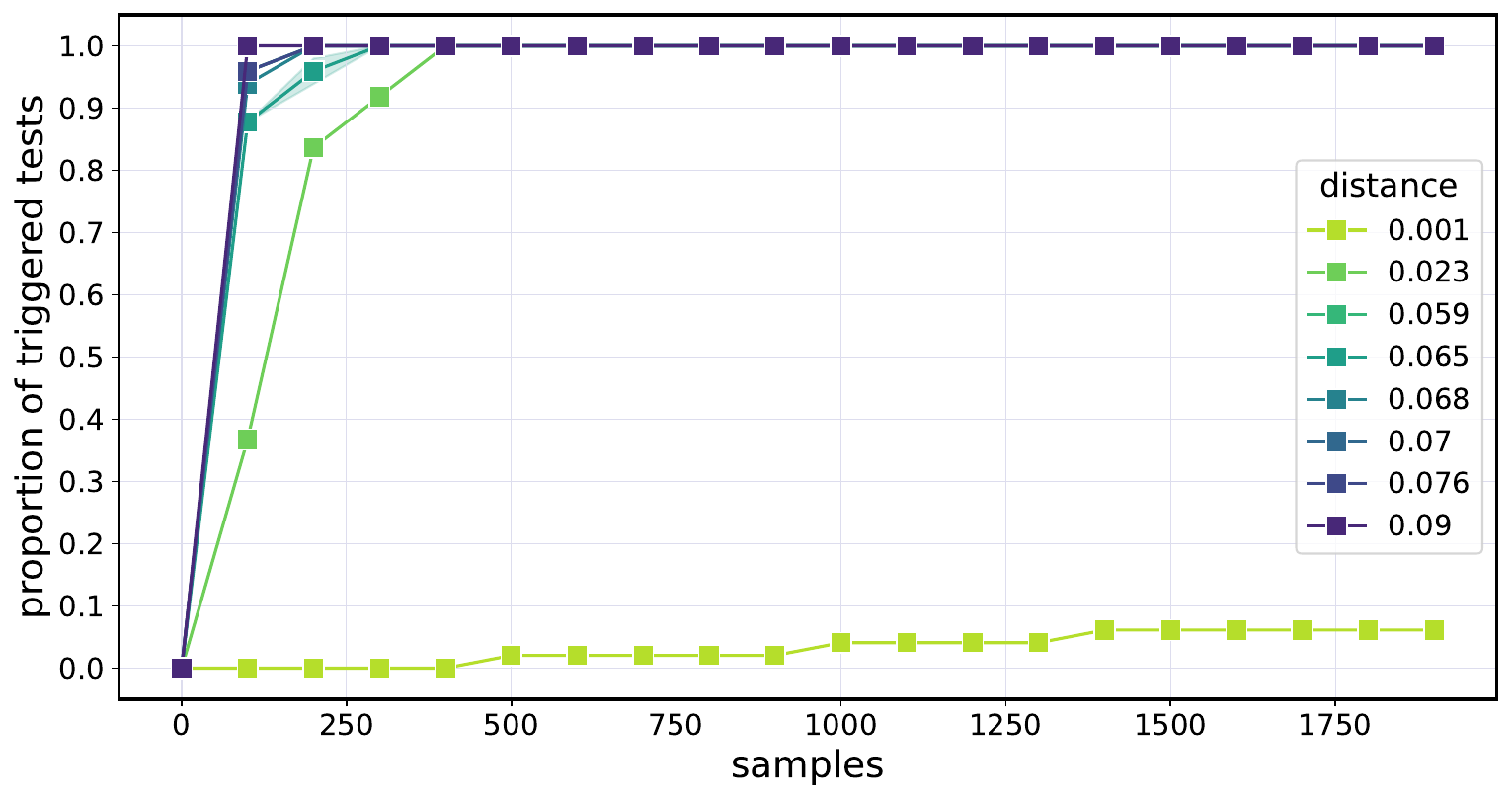}
        %\caption{x}
    \end{subfigure}
    \caption{\textbf{Detection for Mistral-7B-Instruct-v0.2. 
(\emph{left}) and Gemma-1.1-7B-IT (\emph{right}).}}
    \label{fig:mistral_power}
\end{figure*}

\textbf{False positive rate} We extended our experiments to evaluate the false positive rate of the proposed test using the 10 toxicity checkpoints created from \llama and their outputs generated with different random seeds. Apart from checkpoint 4, which showed an 8\% false positive rate, all other checkpoints recorded a 0\% rate after evaluating 4000 samples (each repeated 24 times).

\subsection{Tolerance Test, \texorpdfstring{$\epsilon>0$}{ε>0}}

Figure \ref{fig:calibrated_detection_rate} demonstrates the desirable statistical properties (control on Type I error as well as high power and sample efficiency) of the auditing test with a tolerance parameter $\epsilon>0$, applied to a corrupted checkpoint of \llama from section~\ref{subsec:exact_text}. The test is repeated over 24 runs.

\subsubsection{Translation Auditing with Larger Models}

We extended our experiments from Section~\ref{subsubsec:internal} to include larger models: Llama3-70B-Instruct (with and without few-shot prompting) and Aya-23-35B~\citep{ustun2024aya}. Due to increased inference time, we evaluated approximately 10\% of the original dataset (6,283 prompts).

Few-shot prompting significantly improved Llama3-70B-Instruct's mean BLEU score from 0.0792 to 0.1206. Aya-23-35B achieved the highest mean BLEU score of 0.1227. We set a tolerance threshold $\epsilon = 0.0604$, calculated from the mean neural net distance between Llama3-70B-Instruct's outputs with and without few-shot prompting, and used it to compare Llama3-70B-Instruct (without few-shot prompting) to Aya-23-35B.

Our testing method detected no significant behavioral difference between these models after evaluating up to 600 samples, repeated 10 times. This suggests that few-shot prompting may have a more pronounced effect on larger models like Llama3-70B-Instruct compared to smaller ones like Llama3-8B-Instruct (Section~\ref{subsubsec:internal}). Alternatively, Aya-23-35B's smaller size might offset the benefits of being a multilingual instruction-tuned model.

\begin{wrapfigure}{R}{0.5\textwidth}
% \vspace{-1ex}
\resizebox{0.5\textwidth}{!}{
\begin{minipage}[H]{0.49\textwidth}
\centering
    \includegraphics[width=\textwidth]{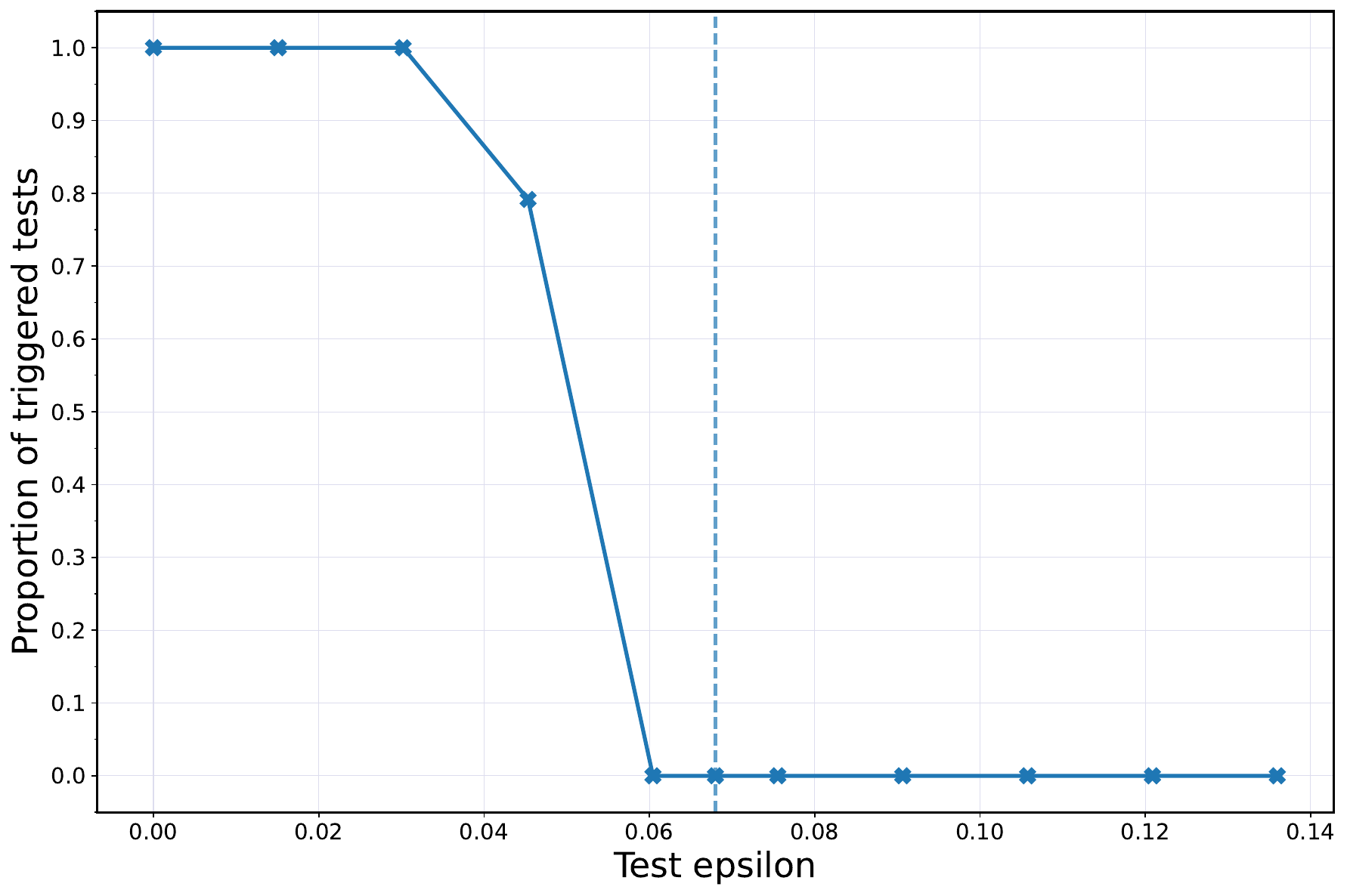}
    \caption{\textbf{Detection Rate over Test Epsilon.} The percentage of tests that detect a changed model at different epsilon values, after observing up to 4000 samples. Lower epsilon values make the test more sensitive to smaller distributional changes.}
    \label{fig:calibrated_detection_rate}
\end{minipage}
}
% \vspace{-1ex}
\end{wrapfigure}

\subsection{Comparison to Baselines}

To the best of our knowledge, our paper presents the first application of sequential hypothesis testing to the problem of detecting shift in model behavior, raising the question of an appropriate baseline to compare the performance of our proposed test. We give a brief overview of possible baselines and discuss some theoretical and practical reasons why our test is successful against them.

\begin{figure}[b]
    \centering
    \includegraphics[width=0.8\linewidth]{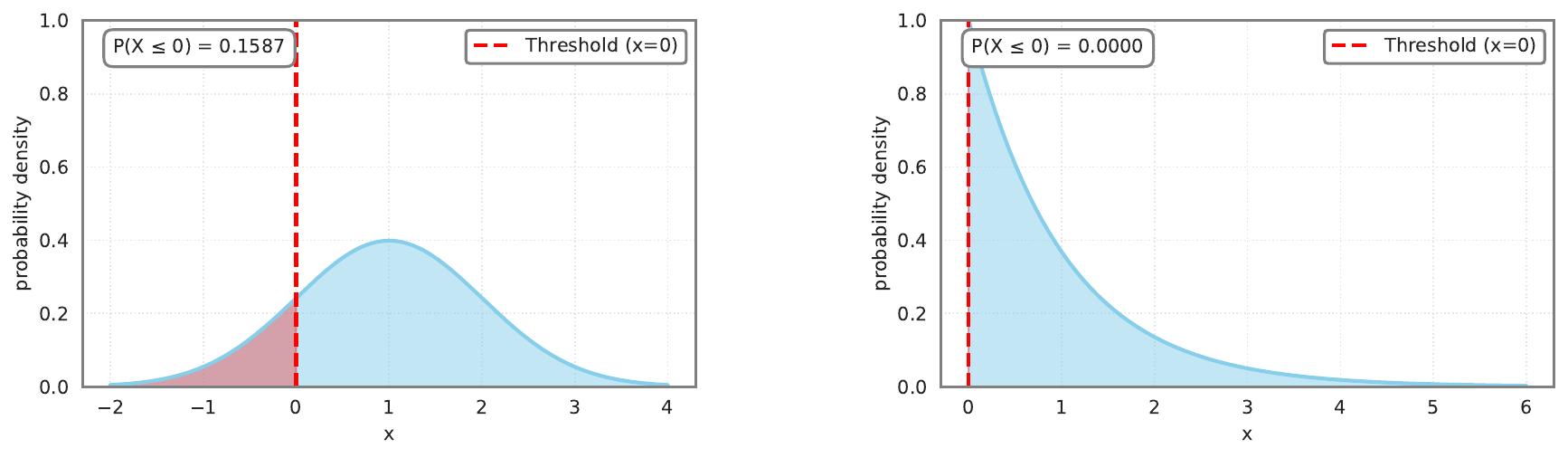}
    \caption{\textbf{Probability distributions with identical expected value and standard deviation can still differ in important ways}. Consider the example of a behavior, where we consider scores $<0$ as unsafe. Both the \textit{(Left)} normal distribution $\mathcal{N}(0,1)$ and the \textit{(Right)} Poisson distribution $\mathcal{P}_\lambda$ have $\mu=1$ and $\sigma^2=1$, but roughly $18\%$ of the probability mass of the normal distribution are below that threshold, vs. $0\%$ for the Poisson distribution.}
    \label{fig:same_mean_var}
\end{figure}

\paragraph{Summary Statistics.} Summary statistics such as mean and standard deviation are efficient in calculating and providing condensed information about a distribution. However, they might not capture some important aspects of behavior distributions. Consider \eg, the example in figure~\ref{fig:same_mean_var}, depicting two distributions with identical mean and standard deviation but whose \emph{tails} -- which might be particulary important for safety-critical behaviors -- look very different.

\paragraph{Distance Measures.} While distance measures such as Wasserstein distance take full distributions into account, we can only estimate them from samples. Given such an estimate, we lack a \emph{decision rule} to draw robust conclusions from the data about the true distance.

\paragraph{Classical Hypothesis Testing.}Unlike our method, classical hypothesis tests are not ``anytime-valid" -- meaning that we have to decide on a sample size before conducting a test or otherwise risk inflating the alpha error when including additional data~\citep{anscombe1954fixed}. We want to specifically consider the example of the two-sample Kolmogorov-Smirnov test that checks whether two samples come from the same distribution~\citep{pratt1981kolmogorov}.
Exacerbating the issue, the test is non-parametric, meaning that we cannot determine a sample size upfront via power analysis (\ie, based on the desired power and particular effect size) without making assumptions about the underlying distributions. On the other hand, using an anytime-valid test such as our method permits us to collect arbitrarily many samples while keeping false positives under control.

We conducted an experiment to study how repeated tests can lead to an inflated $\alpha$ error when using the Kolmogorov-Smirnov test versus our proposed method. We do this in the following way (presented in Algorithm~\ref{alg:ks_test}): During DAVT, whenever a new batch of data is collected, we not only update the wealth but also carry out a two-sample Kolmogorov-Smirnov test using all the available test data up until that point. Results for the three baseline models are depicted in table~\ref{tab:statistical-tests}. We find that repeated application of the Kolmogorov-Smirnov test leads to an inflated $\alpha$ for 2 out of the 3 models considered.

\begin{wrapfigure}{R}{0.5\textwidth}
\vspace{-1ex}
\resizebox{0.5\textwidth}{!}{
\begin{minipage}[H]{0.6\textwidth}
\begin{algorithm}[H] \caption{Repeated Kolmogorov-Smirnov Test} \label{alg:ks_test} \begin{algorithmic}[1] \STATE \textbf{Input:} $\{\rvx_i\}{i \geq 1}$ (stream of prompts), $B$ (behavior function), $\baselineM$ (baseline model API), $\changedM$ (current model API), $\alpha$ (type-I error limit under null), $n$ (batch size) \STATE Initialize empty lists: $\mathcal{B} \leftarrow \emptyset$, $\mathcal{B'} \leftarrow \emptyset$ \WHILE{true} \STATE Collect a batch of $n$ prompts: $\{\rvx_{t,i}\}_{i=1}^n$ \STATE Compute behavior scores for the batch: \FOR{$i = 1$ to $n$} \STATE $b_{t,i} \leftarrow B(\rvx_{t,i}, \baselineM(\rvx_{t_i}))$ \STATE $b'_{t,i} \leftarrow B(\rvx_{t,i}, \changedM(\rvx_{t,i}))$ \ENDFOR \STATE Append the batch scores to the lists: \STATE \quad $\mathcal{B} \leftarrow \mathcal{B} \cup \{b_{t,i}\}_{i=1}^n$ \STATE \quad $\mathcal{B'} \leftarrow \mathcal{B'} \cup \{b'_{t,i}\}_{i=1}^n$ \STATE Perform Kolmogorov-Smirnov Test on $\mathcal{B}$ and $\mathcal{B'}$: \STATE \quad Compute p-value $p_t \leftarrow \text{KS}(\mathcal{B}, \mathcal{B}')$ \IF{$p_t \leq \alpha$} \STATE Break and reject null hypothesis \ENDIF \ENDWHILE \end{algorithmic} \end{algorithm}
%      \end{minipage}
% }
% \vspace{-1ex}
% \end{wrapfigure}
     \end{minipage}
}
\vspace{-1ex}
\end{wrapfigure}

\begin{table}[t]
\caption{\textbf{Comparison of False Positive Rates for our proposed anytime-valid method and Kolmogorov-Smirnov Test}. Results show an increase in $\alpha$-error in 2 out of 3 cases when using the Kolmogorov-Smirnov test repeatedly on a growing number of batches while ours keeps it below $\alpha=5\%$. Runs were repeated 24 times, with each test running on up to 4000 samples and a batch size of 25.}
\label{tab:statistical-tests}
\begin{center}
\begin{tabular}{lcccc}
\multicolumn{1}{c}{\bf Test} & \multicolumn{1}{c}{\bf Llama3-8B-Instruct} & \multicolumn{1}{c}{\bf Mistral-7B-Instruct} & \multicolumn{1}{c}{\bf Gemma-1.1-7b}
\\ \hline \\
\textbf{Our Proposed Test} & 4.2\% & 0\% & 0\% \\
Kolmogorov Smirnov Test & 8.3\% & 0\% & 8.3\% \\
\end{tabular}
\end{center}
\end{table}

\subsection{Effects of Randomness and Errors in the Behavior Scoring Function}
\label{app:subsec:behavior_score}

\paragraph{Effects of Randomness.} The formulation of behavior shift auditing allows for the behavior scoring function to be a stochastic operator, as it is agnostic of the sources of variance in the distributions it compares, see Appendix~\ref{sec:derivation_and_proofs}. In the limit of infinite samples, the test result itself is unaffected by this randomness as long as the outputs of the stochastic behavior scoring function $\tilde{B}$ still reflect true scores in expectation \ie,
\begin{align*}
    B(\mathbf{x,y}) = \mathbb{E}[\tilde{B}(\mathbf{x,y})] \quad \text{ for every } (\mathbf{x,y})
\end{align*}
where $(\mathbf{x,y}) \in \mathcal{X} \times \mathcal{Y}$ denotes a (prompt, continuation)-pair. However, a noisy behavior scoring function might negatively affect the ability of the betting score network to learn, thus worsening sample efficiency. 

To investigate this, we repeat experiments from section \ref{subsec:exact_text}, modeling the stochasticity of $B$ by adding random Gaussian noise of different magnitudes to the scores from Perspective API.\footnote{Final toxicity scores are then clipped to the interval [0,1].} Figure~\ref{fig:power_over_noise} shows the fine-tuning detection rates for Llama3-8B-Instruct when using $\mathcal{N}(0,0.01)$, $\mathcal{N}(0, 0.05)$ and $\mathcal{N}(0, 0.1)$ noise.

\begin{figure}
    \centering
    \includegraphics[width=0.8\linewidth]{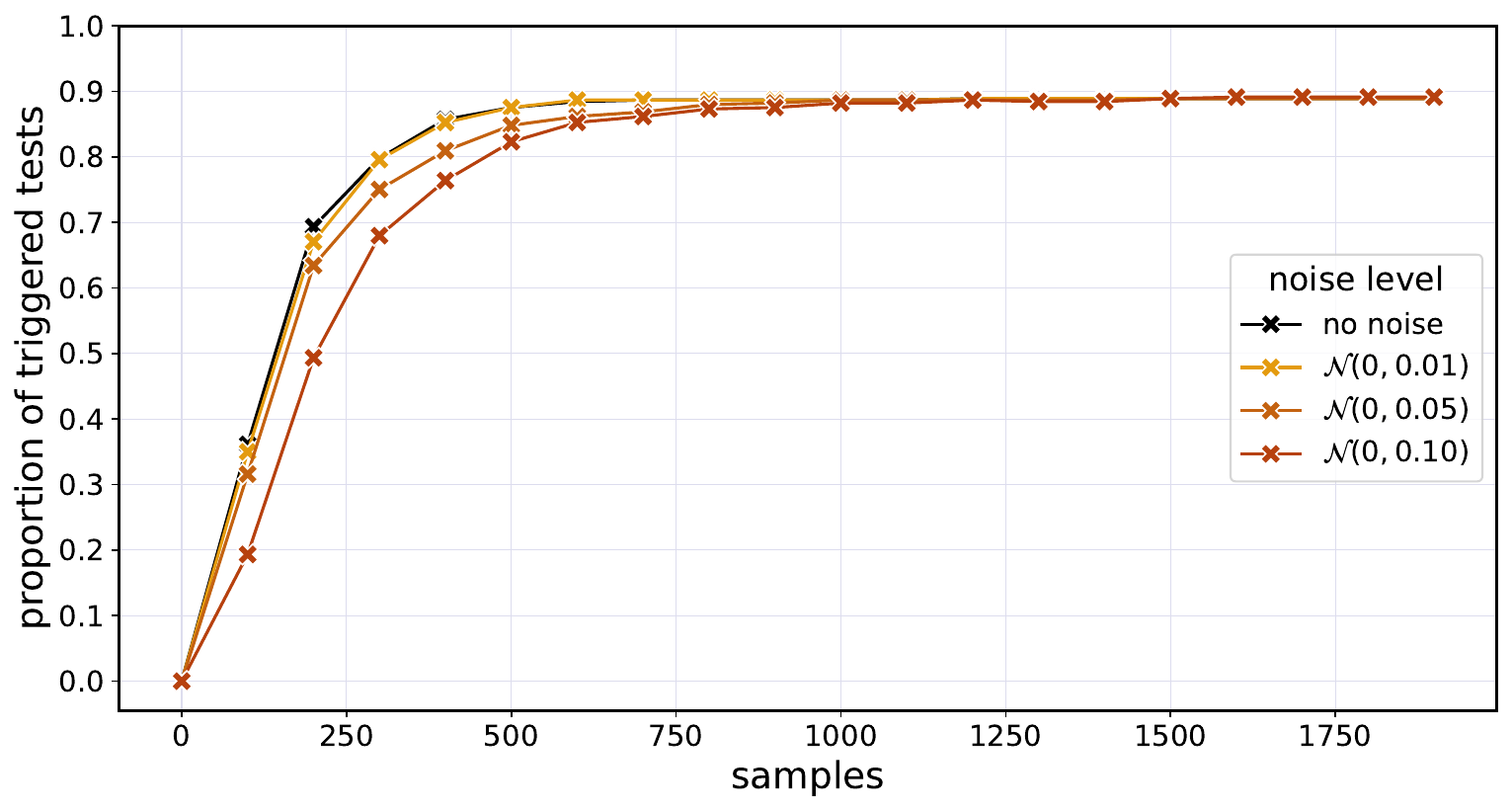}
    \caption{\textbf{Fine-tuning Detection for Llama3-8B-Instruct using noisy Scoring Functions.} The detection frequency as a function of number of generated samples. Each curve represents the average detection frequency over the 10 fine-tuning checkpoints produced in section \ref{subsec:exact_text}, but when using a scoring function with additional Gaussian noise.}
    \label{fig:power_over_noise}
\end{figure}

We find that sample efficiency decreases the more noise is added to toxicity scores. However, detection rates still eventually stabilize at the same rates as when using toxicity scores without additional noise.

\paragraph{Effects of Systematic Errors.} Our test is further robust against any bijective transformation in the behavior scoring function that could be recovered by the betting score network $\phi$, including scaling or consistent uniform under(over-)estimation.

\paragraph{Weak Proxies.} We call a scoring function $B_{\text{proxy}}$ ``weak proxy" for behavior $\mathcal{B}$ if it is correlated with the ground-truth scoring function $B$ on the available test data. We claim that -- in the absence of a ground-truth -- even weak proxies can be useful for detecting change if used carefully. The underlying rationale is that discrepancies in the distributions of ground-truth scores are likely to induce corresponding discrepancies in the distributions of proxy scores, provided there is a correlation between them. However, caution is warranted because positive test results may arise from changes in behaviors that are uncorrelated with the ground-truth scoring function. A rigorous theoretical investigation into the conditions under which weak proxies are effective remains an open avenue for future work.

\subsection{Extension to Multiple Behaviors}
\label{app:subsec:multiple_behaviors}

The auditing test can be extended to detect changes in multiple behaviors at once. The requirement for this is the existence of a dataset where all of the behaviors in question can be observed \ie, manifest with some non-zero probability.

The exact test is an application of DAVT, which \citet{DBLP:conf/aistats/PandevaFRS24} have successfully applied to multi-dimensional distributions. Assume we want to test for changes in $d$ behaviors as measured by behavior scoring functions $B_1, \dots, B_d$, producing the $d$-dimensional score 
\begin{align*}
    \boldsymbol{B}(X, M(X)):=(B_1(X, M(X)), \dots, B_d(X, M(X))
\end{align*}
In this case, the only modification necessary is the betting score network, with $\phi$ now taking in scores from $[0,1]^d$.

The generalization of the tolerance test to multiple behaviors is similarly straightforward if we decide to set a 
\emph{global} tolerance threshold $\epsilon>0$ as the maximal allowed difference between multi-dimensional distributions. Note that the derivation of the two-sample test with tolerance in Appendix~\ref{sec:derivation_and_proofs} does not depend on $X,Y$ being real-valued; we can instead define $\boldsymbol{X}:=(X_1, \dots, X_d), \boldsymbol{Y}:=(Y_1, \dots, Y_d) : \mathcal{X} \rightarrow [0,1]^d$. 

We might instead want to set \emph{separate} tolerance thresholds for different behaviors. The current version of our test does not allow for this. As an ad-hoc solution, we propose carrying out multiple tests on the same data in parallel and correcting for an increase in Type I error (\eg, using Bonferroni correction).

\end{document}